\documentclass[journal]{IEEEtran}
\usepackage[T1]{fontenc}
\usepackage{graphicx,colordvi,psfrag}
\usepackage{amsmath,amssymb,algpseudocode}
\usepackage{algorithm}
\usepackage{algorithmicx}
\usepackage{calc,pstricks, pgf, xcolor}
\usepackage{bbding}
\usepackage{bbm}
\usepackage{dsfont}
\usepackage{stfloats}
\usepackage{hyperref}
\usepackage{xparse}
\usepackage{enumerate}
\usepackage{tikz}
\usetikzlibrary{shapes,arrows,positioning,calc}

\usepackage{subcaption}

\newcommand{\ZZ}{\mathbb{Z}}
\newcommand{\RR}{\mathbb{R}}

\newcommand{\diag}{\mathop{\mathrm{diag}}}

\newcommand{\trace}{\mathop{\mathrm{tr}}}
\def\tr{\trace}

\newcommand{\Vol}{\mathrm{Vol}}

\newcommand{\eps}{\varepsilon}

\newcommand{\Unif}{\mathrm{Uniform}}
\newcommand\indep{\protect\mathpalette{\protect\independenT}{\perp}}
\def\independenT#1#2{\mathrel{\rlap{$#1#2$}\mkern2mu{#1#2}}}

\def\EE{\mathbb{E}}
\newcommand{\m}{\mathcal}

\newcommand{\HR}{\mathrm{High-Rate}}
\newcommand{\covol}{\mathrm{covol}}

\NewDocumentCommand{\DIP}{e{^_}}{D^{\mathrm{IP}\IfValueT{#1}{,#1}}_{\IfValueT{#2}{#2}}}

\NewDocumentCommand{\DMM}{e{^_}}{D^{\mathrm{MM}\IfValueT{#1}{,#1}}_{\IfValueT{#2}{#2}}}

\DeclareMathOperator*{\argmin}{\arg\!\min}

\newcommand{\nbyp}[1]{{\color{red}[ \textbf{YP:} #1}]}

\newtheorem{proposition}{Proposition}

\newtheorem{lemma}{Lemma}

\newenvironment{proof}[1][Proof]{\noindent\textbf{#1.} }{\ \rule{0.5em}{0.5em}}

\begin{document}
\title{High-Rate Quantized Matrix Multiplication II}

\author{Or Ordentlich 
and	Yury Polyanskiy 
\thanks{O. Ordentlich is with the 
Hebrew University of Jerusalem, Israel (\texttt{or.ordentlich@mail.huji.ac.il}).  Y. Polyanskiy is with the MIT,
USA (\texttt{yp@mit.edu}). The work of OO was supported by the Israel Science 
Foundation (ISF),
grant No. 2878/25. The work of YP was supported in part by the MIT-IBM Watson AI Lab and by the National Science
    Foundation under Grant No CCF-2131115. }
}

\date{}

\maketitle
\begin{abstract}
This is the second part of the work investigating quantized matrix multiplication (MatMul). 
In part I we considered the case of calibration-free quantization, whereas here we discuss the
setting where covariance matrix $\Sigma_X$ of the columns of the second factor is available. This
setting arises in the ubiquitous task of \textit{weight-only} post-training quantization of LLMs.
 
Weight-only quantization is related to the problem of weighted mean squared error (WMSE) source
coding, whose classical (reverse) waterfilling solution dictates how one should distribute rate
between coordinates of the vector. We show how waterfilling can be used to improve practical LLM
quantization algorithms (GPTQ), which at present allocate rate equally. A recent scheme (known as ``WaterSIC'') 
that only uses scalar INT quantizers is analyzed and its high-rate
performance is shown to be (a) basis free (i.e., characterized by the determinant of $\Sigma_X$ and, thus, unlike existing
schemes, is immune to applying random rotations);  and (b)  within a multiplicative
factor of $\frac{2\pi e}{12}$ (or 0.25 bit/entry) of the information-theoretic 
distortion limit. GPTQ's performance, in turn, is affected by the choice of basis, but for a random
rotation and actual $\Sigma_X$ from Llama-3-8B we find
it to be within 0.1 bit (depending on the layer type) of WaterSIC, suggesting that GPTQ with
random rotation is also near optimal, at least in the high-rate regime. 
\end{abstract}


\section{Introduction}

Matrix multiplication (MatMul) constitutes the main building block of
AI, and quantization for matrix multiplication has become a central technique for increasing its
efficiency. In Part I~\cite{OP26part1} we discussed the problem of quantizing two generic matrices
with the goal of approximating their product with minimal distortion. The fundamental limits for
this assumptions-free setup were derived in~\cite{ordentlich2024optimal}, which we compared with
the performance of popular generic quantization schemes for MatMul (such as ubiquitous absmax
round-to-nearest with INT or FP arithmetic).

Generic quantized MatMul schemes require no prior assumptions on the matrices to be multiplied,
and their performance is consequently robust. However, in LLMs activations often have a rather
special low-dimensional structure, obviating the need for orthogonalizing quantization errors with
the principal directions of activation variation~\cite{adaround,frantar2023optq}. The process of
collecting statistics of activation is known as \emph{calibration}, and thus we can think of Part
I as focusing on calibration-free and Part on calibration-aided methods. Clearly, the goal of such
methods is to improve matrix product approximation by leveraging calibration statistics.

Furthermore, in this paper we focus on the case where arithmetic (actual matmul) is executed and
activations loaded in
full precision. The goal, thus, is to load weight matrices in compressed (low-resolution) format
from the high bandwidth memory (HBM) and then dequantize them into full-precision matrix before
executing the actual matmul. 
Compared to full-precision (BF16) weights, quantizing those to $R$ bits per parameter reduces the required HBM storage and
bandwidth capacity by factor
$16/R$. 

In Section~\ref{sec:weightonly_theory} we study the fundamental limits of \emph{weight-only quantization} where we are interested in the
matrix product $X^\top W$ and only $W$ needs to be quantized to $\hat{W}$, whereas $X$ is given
in full resolution. The problem is made interesting through the fact that while quantization of
$W$ cannot know $X$, it has access to covariance matrix $\Sigma_X$ of (the columns of) $X$.
We show that, in essence, using standard ``isotropic'' codebook and $\Sigma_X$-oblivious
quantization, one attains distortion $D\approx
({1\over n} \tr \Sigma_X) 2^{-2R}$. At the same time, the classical waterfilling
formula says that optimal codebook (with optimal quantization) can attain $D\approx
 |\Sigma_X|^{1/n} 2^{-2R}$, with the difference given by the gap in AM-GM inequality for
eigenvalues of $\Sigma_X$. It turns out that even with isotropic codebook, but $\Sigma_X$-aware
encoding one can still attain the same performance.

In Section~\ref{sec:weightonly_practice} we discuss the practical aspects of the weights-only quantization problem. Low-complexity algorithms for weight-only quantization in LLMs have been pioneered
by~\cite{dettmers2022gpt3} (a $\Sigma_X$-oblivious rounding to nearest integer),~\cite{adaround}, and
GPTQ~\cite{frantar2023optq}. The latter was originally derived from~\cite{hassibi1993optimal}, but
was soon found to be equivalent to classical algorithms known as \emph{successive
interference cancellation (SIC)} in communication and Babai's algorithm in computer science, cf.
Section~\ref{sec:weightonly_practice}. We show that GPTQ (that uses constant rate per entry)
attains distortion decaying as $D\approx {2\pi e\over
12} ({1\over n} \sum_{i=1}^n U_{i,i}^2) 2^{-2R}$, where $U^\top U = \Sigma_X$ is the upper-triangular
(Cholesky) decomposition of $\Sigma_X$. At the same time, by adjusting rate per-coordinate
(``WaterSIC'' algorithm~\cite{lifar2026watersic}) one can
attain $D\approx {2\pi e\over
12} (\prod_{i=1}^n U_{i,i}^2)^{1\over n} 2^{-2R}$. Since $\prod_{i=1}^n U_{i,i} = \sqrt{|\Sigma_X|}$ this
performance is only a factor away from information-theoretically optimal, or within a rate gap of
at most 0.25 bit. 

Figures~\ref{fig:llama_ev} and~\ref{fig:llama_chol} exemplify rate gains for all these four (two
theoretical and two practical) algorithms on the example of layers of Llama-3-8B and wiki2
calibration data.

\section{Weight quantization: Theory}
\label{sec:weightonly_theory}

Let us restrict attention to a particular linear layer in the network with weight matrix $W\in
\RR^{n\times a}$. The linear layer operates by taking (a vector of) input activations $X$ and
producing
$$ Y = X^\top W\,.$$
For that, $W$ needs to be loaded from memory and this results in the crucial
bottleneck during generation/reasoning part of the LLM operation. Thus, our job is to replace $W$
with quantized version $\hat W$, which can be loaded much faster, with the target of minimizing
\textit{expected} distortion over random inputs $X$, that is we aim to minimize
\begin{align}\label{eq:wot_1}
	D&={1\over n} \EE\|X^\top(W-\hat{W})\|_F^2 \nonumber\\
    &=\frac{1}{n} \tr  (W-\hat W)^\top \Sigma_X (W-\hat W)\,,
\end{align}
where $\Sigma_X=\EE[X X^\top]$ (This objective is by far the most popular one, though, others
exist~\cite{li2021brecq,tseng2025model,badri2023hqq}.)

Since only the second order statistics $\Sigma_X$ affect this objective, in practice one uses
a set of samples (known as \emph{calibration data}) from the underlying distribution in order to
estimate it. With the second order statistics at hand, the problem of quantizing $W$ becomes a
weighted mean squared error (WMSE) quantization problem. We note that calibration data about $X$ is
also useful for quantization of both weights and activations (i.e. the setting of~\cite{OP26part1}). See e.g.~\cite{xiao2023smoothquant},~\cite[Section
4.5]{NestQuant},~\cite{zhang2025qronos},~\cite{zhang2025provable}. 

Weight matrices start life as iid Gaussian and evolve during training. However, in many ways their
statistics are still largely very similar to iid Gaussian~\cite[Appendix]{lifar2026watersic} and that is how we will model them in this
(theoretical) section. Some (mostly older) LLMs have peculiar aspects of weight matrices, such as
existence of outliers and rank deficiencies, that need to be exploited/mitigated in practical
algorithms, though.

Notice also that objective~\eqref{eq:wot_1} treats distortion across columns of $W$ in an additive
manner. Consequently, analyzing the case of $a=1$ (single column, or single output neuron) can be
done without loss of generality, at least as long as $n\gg1$ and full advantage of vector
quantization can be exploited already in dimension $n$, without needing to do joint vector
quantization across all $na$ dimensions.

So in summary, the goal of this section is to understand theoretically the problem of mapping
$W \sim \mathcal{N}(0, I_n)$ to $\hat W$ belonging to the universe of $2^{nR}$ possibilities, with
the goal of minimizing
\begin{align}
D = {1\over n} (W-\hat W)^\top \Sigma_X (W-\hat W)\,.
\label{eq:DwmseDef}
\end{align}

One important aspect of the weight only quantization problem worth pointing out from the outset is
the asymmetry between encoding and decoding. The encoding (quantization) is done offline, once for
each model, and may therefore be computationally demanding. The decoding (de-quantization), on the
other hand, is executed online every time a weight matrix is used, and must therefore be highly
efficient. (This is in contrast with activation quantization, which is done online and has to be
extremely computationally efficient.) In particular, while information about $\Sigma_X$ is
available to the encoder (recall that it operates offline and can be quite slow), the decoder's
online operation requires all information about $\Sigma_X$ be packaged inside the rate-constrained
description of $W$. Thus, assuming decoder knows $\Sigma_X$ or its SVD basis (rather than only its
spectrum) is not reflecting the practical constraints adequately.

Nevertheless, below we will start by treating the impractical \textit{fully informed case},
which will provide a firm lower bound to the more practical version of \textit{uninformed/oblivious decoder
case}. We initiate our discussion with a simple heuristic demonstration of the main points on the
example of scalar quantization.

\subsection{Scalar quantization and waterfilling}

Before discussing information theoretic results, let us consider a very simple case of $X$ with
uncorrelated coordinates, i.e. $\Sigma_X = \diag\{\lambda_1,\ldots,\lambda_n\}$. Suppose, in
addition, that the weights satisfy $W_i\in\Unif[-1/2,1/2)$.

Consider, first, the case of usual uniform quantization using $\epsilon$-grid, where
$\epsilon=2^{-R}$. The resulting distortion (in accordance with additive noise approximation discussed in Part I~\cite{OP26part1}) will be
$$ D_{1}(R) = {1\over n}\sum_{i=1}^n \lambda_i {\epsilon^2\over 12} = {\bar \lambda_{A}\over 12}
2^{-2R}\,,$$
where $\bar \lambda_A = {1\over n} \sum_{i} \lambda_i$ is the arithmetic mean of $\lambda$'s.

Now, given that different coordinates of $W$ contribute differently to $D$, one may naturally try
to allocate rate more judiciously. Specifically, let us solve the problem of
$$ D_2 = \min {1\over 12n} \sum_{i=1}^n \lambda_i \epsilon_i^2\,,$$
subject to rate constraint $\prod {1\over \epsilon_i} \le 2^{nR}$. Using Lagrange multipliers we
can easily find a parametric formula for the optimizer, given in terms of parameter $\tau > 0$:
\begin{align*} D_2 &= {1\over n} \sum_{i=1}^n \min(\tau, \lambda_i/12)\\
   R &= {1\over 2n} \sum_{i=1}^n \log \max(1, {\lambda_i\over 12\tau})\,,
\end{align*}
where the grid spacings $\epsilon_i = \min(\sqrt{12\tau\over \lambda_i}, 1)$.
This kind of allocation is traditionally called (reverse) \emph{waterfilling solution}, due to
interpretation of $\tau$ as water level that reduces $i$'th coordinate baseline
distortion (equal to $\lambda_i/12$) down to $\tau$. Those coordinates that are so insignificant
(``below waterlevel'') get zero rate allocation and are quantized to zero. 

In accordance with our focus on high-rate case, we can assume that $\tau < \min_i \lambda_i$ and get
a simplified expression:
$$ D_2(R) =  {\bar \lambda_G\over 12} 2^{-2R}\,,$$
where $\bar \lambda_G = (\prod_i \lambda_i)^{1/n}$ is the geometric mean of $\lambda_i$'s. 

In all, we conclude that when quantizing vectors under weighted quadratic metric, one should
choose quantization grid spacing $\epsilon_i \propto {1\over \sqrt{\lambda_i}}$ with coefficient
of proportionality chosen depending on the target required rate $R$. How much does one win from
this optimization is given by the AM-GM inequality $\bar \lambda_G < \bar \lambda_A$. Another way
to put this is that optimizing per-coordinate quantizers one wins about ${1\over 2} \log {\bar \lambda_A\over \bar \lambda_G}$ bits of rate. 

We will return to this idea below, when we describe a practical WaterSIC algorithm for weight-only
quantization in Section~\ref{sec:watersic}. There the role of $\lambda_i$'s is played by the
diagonal elements of the Cholesky decomposition of (non-diagonal) $\Sigma_X$. 

Fig.~\ref{fig:llama_ev} illustrates rate-advantage for input activations to various linear layers
of Llama-3-8B.

\begin{figure*}[t] 
    \centering
    
    \begin{subfigure}[b]{0.48\textwidth}
        \centering
        \includegraphics[width=\linewidth]{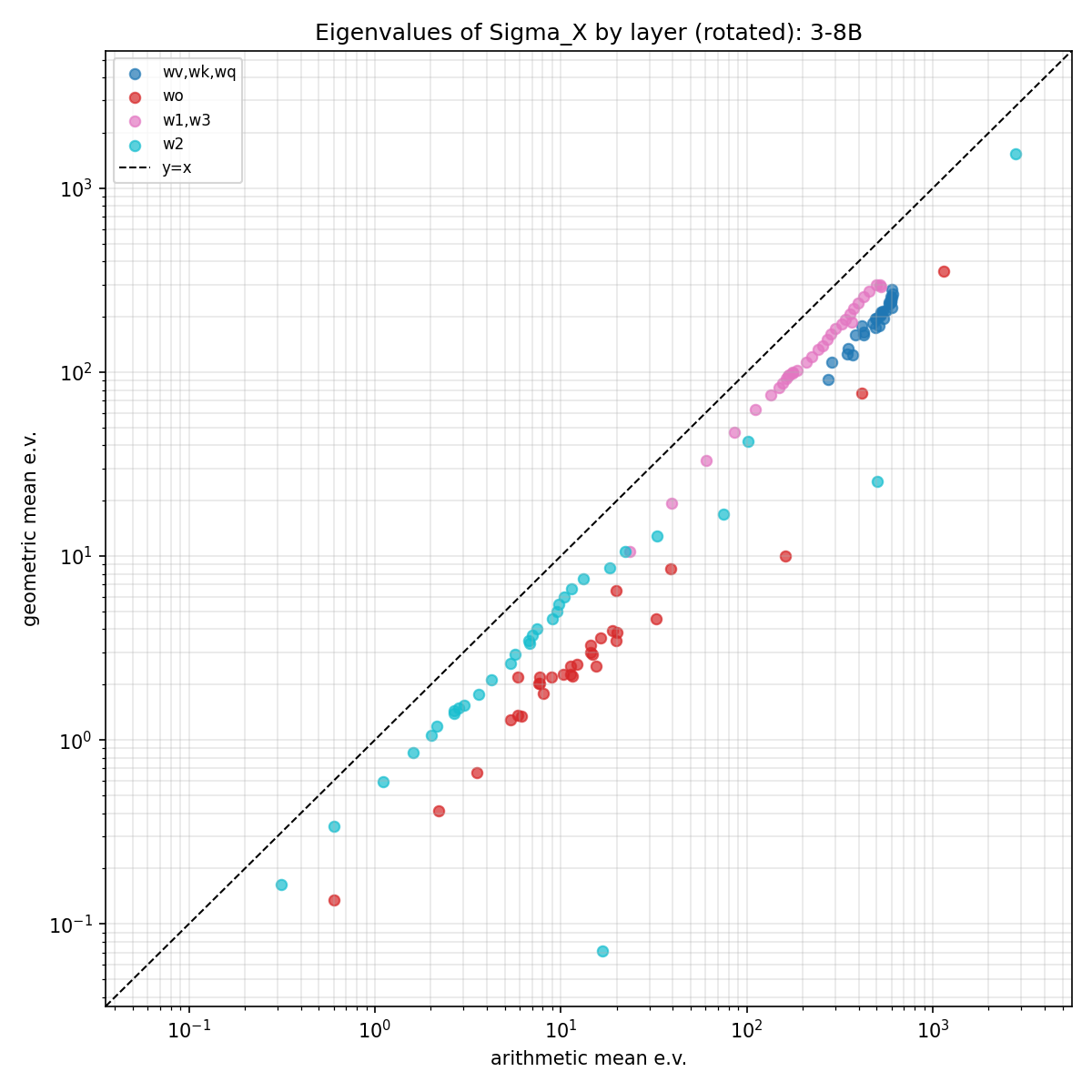}
        \caption{Scatter plot illustrating gap in AM-GM inequality for eigenvalues of $\Sigma_X$. }
    \end{subfigure}
    \hfill 
    \begin{subfigure}[b]{0.48\textwidth}
        \centering
        \includegraphics[width=\linewidth]{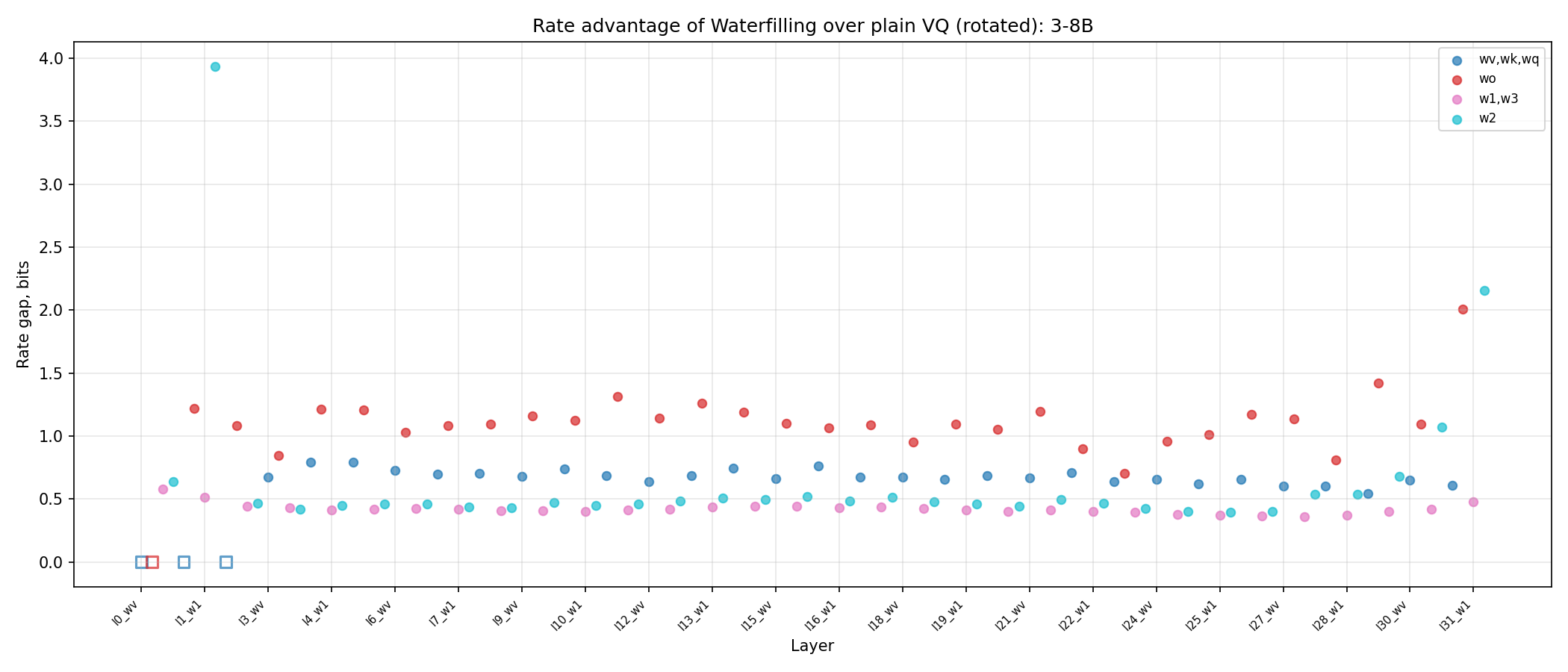}
        \caption{Rate advantage (in high-rate regime) of waterfilling over $D_{iso}$ (which
	corresponds to the case of $\Sigma_X$-oblivious encoder and decoder). 
	Squares correspond to (skipped) layers with singular $\Sigma_X$.}
    \end{subfigure}
    
    \caption{Illustrating $\Sigma_X$ of activations entering various layers of Llama-3-8B when
    processing Wikitext-2 dataset. Note that this is an estimate of the rate advantage assumes
    weight matrices are well modeled by $\mathcal{N}(0,I_n)$. In particular, actual weight
    matrices were never used for this plot.}
    \label{fig:llama_ev}
\end{figure*}

\subsection{WMSE Quantization: information theoretic limit}

The waterfilling solution derived heuristically in the previous section can be in fact made
rigorous, at least for the case of Gaussian weights $W\sim\m{N}(0,\sigma_W^2 I_n)$, which we
consider here. Another assumption we make is that the covariance matrix $\Sigma_X \in \m{PSD}_n$ (the
set of all $n\times n$ PSD matrices) is fixed and known to both encoder and decoder.

The formal problem is this: given $W\sim\m{N}(0,\sigma_W^2 I_n)$,  encoder $f:\RR^n\to[2^{nR}]$ produces a
rate-$R$ description of $W$. Subsequently, decoder $g:[2^{nR}]\to\RR^n$ converts this
description into the estimate $\hat W$. The information-theoretic question is to determine the value of the smallest distortion attainable by the best pair $(f,g)$, i.e. 
$$ D^*(\Sigma_X, R) = \min_{f,g} \frac{1}{n} \EE[ (W-\hat W)^\top \Sigma_X (W-\hat W)]\,.$$

The naive (suboptimal) solution is to use a standard isotropic Gaussian codebook of rate $R$,
which results in error covariance given by $\EE[(W-\hat W)(W-\hat W)^{\top}] = \sigma_W^22^{-2R} I_n$.
Note that the previous result is simple if we understand $\EE$ as averaging over the Gaussian
codebook, but is rather non-trivial if we want to demonstrate existence of a codebook with
nearly-white covariance matrix of errors. This was the main technical difficulty behind the
results of~\cite{ordentlich2024optimal}, cf. Theorem 13 there. Nevertheless, under this assumption
on the error distribution, we get
\begin{equation}\label{eq:Diso}
	D_{iso}(R) = 2^{-2R} {\sigma_W^2\over n} \tr \Sigma_X\,.
\end{equation}
Although this is generally very far from $D^*(\Sigma_X,R)$, such scheme works simultaneously for
all $\Sigma_X$ and does not require knowledge thereof.

Scheme that is allowed to fully exploit knowledge of $\Sigma_X$ can first compute $W' = V^\top W$,
where $V$ is the orthogonal matrix in the SVD decomposition of  $\Sigma_X = V^\top \Lambda V$.
If decoder can estimate $\hat W'$, then it can also set $\hat W = V \hat W'$. Then, the distortion
can be expressed in terms of $(W', \hat W')$ as 
$$ D = {1\over n} \sum_{i=1}^n \lambda_i \EE[(\hat W_i'-W_i')^2]\,.$$
We see that after this change of coordinates, the problem indeed becomes that of \emph{weighted}
mean-squared error (WMSE), justifying the name.

To relate distortion and rate, we consider a standard data-processing argument, see~\cite[Section 23.4]{PWbook24}:
$$ nR \ge I(W';\hat W') \ge \sum_{i=1}^n I(W'_i; \hat W_i')\,,$$
where the second inequality is due to independence of coordinates of $W'\sim
\mathcal{N}(0,\sigma_W^2 I_n)$, cf.~\cite[Theorem 6.1]{PWbook24}. Now, given value $D_i =
\EE[(\hat W_i'-W_i')^2]$ the smallest $I(W'_i; \hat W_i')={1\over 2} \log {\sigma_W^2\over D_i}$
is attained under Gaussian coupling, cf.~\cite[Section 26.1.2]{PWbook24}. Overall, we get
\begin{align*} R &\ge {1\over 2n} \sum_{i=1}^n \log {\sigma_W^2 \over D_i} \\
   D &= {1\over n} \sum_{i=1}^n \lambda_i D_i\,.
\end{align*}   
Minimizing the value of $D$ given $R$ can be done via Lagrange multipliers, resulting in the (reverse)
\emph{waterfilling solution} parameterized by $\tau>0$:
\begin{align}
D^*(\tau) &=\frac{\sigma_W^2}{n}\sum_{i=1}^n \min\{\lambda_i,\tau\},\nonumber\\
R^*(\tau) &=\frac{1}{n}\sum_{i=1}^n\frac{1}{2}\log \max\left\{1,\frac{\lambda_i}{\tau}\right\}.
\label{eq:waterfillingsolution}
\end{align}
Note that, while our argument only gives a lower bound on the minimal distortion, it can be shown
to be achievable asymptotically for large $n$, even in the single vector setting~\cite[Proposition 2.1]{HOP26}. 
Thus, in the remainder
of this paper, we will consider the value $D^*(R)$ given by waterfilling to be the actual
fundamental limit, ignoring small non-asymptotic penalty.

Just like in the previous heuristic section, we see that optimal codebook allocates rate
unequally: the PCA direction corresponding to high values of $\lambda_i$ should get quantized more
finely with $D_i \propto {1\over \lambda_i}$. In the high-rate regime we get 
\begin{align}
D_{\HR}^*(R)&=
|\Sigma_X|^{1/n}\sigma_W^2 2^{-2R}\,,    
\label{eq:Dhr}
\end{align}
and the advantage compared to uninformed isotropic coding is again given by the gap in the AM-GM
inequality, or 
$$ {1\over 2} \log {{1\over n} \tr \Sigma_X\over |\det \Sigma_X|^{1/n}} $$
bits in effective rate.

\subsection{$\Sigma_X$-oblivious decoder}\label{sec:universal_decoder}

In the setup above, we have allowed the decoder to depend on $\Sigma_X$ (in particular, to apply
SVD decomposition). However, in LLM applications this would require communicating $\Sigma_X$ (or
the SVD basis matrix $V$) to GPU's compute devices, which would be very costly. Instead, we would
like to use a very lean decoder, that only uses those bits provided with the description of $W$ to
produce its reconstruction, universally for all $\Sigma_X$.

We will therefore require that while the encoder $f(W,\Sigma_X)$ may depend on $\Sigma_X$, the decoder $g:[2^{nR}]\to \RR^n$ depends only on the bits it receives from the encoder. With this constraint, the quantization problem consists of a codebook $\m{C}\subset\RR^n$ with $|\m{C}|\leq 2^{nR}$ codewords, agreed upon by the encoder and decoder, where the encoder's job is to solve, or approximately solve, the problem
\begin{align}
\hat{W}^*=\argmin_{c\in\m{C}}[(W-c)^\top \Sigma_X (W-c)].
\label{eq:WMSEquantproblem}
\end{align}
The encoder then describes $\hat{W}^*$ using $nR$ bits, and the decoder can reconstruct $\hat{W}^*$ from those bits.

We know from the previous section that in order to achieve the informed (waterfilling) $D^*(R)$
optimal codebook needs to be anisotropic: the ``grids'' need to be denser along the PCA directions
corresponding to higher values of $\lambda_i$. In the uninformed case, we are not able to do such
adaptation (since $\Sigma_X$ is unknown at the time of the codebook design). Thus, we are forced
to use isotropic codebook. In~\cite{HOP26} a somewhat
surprising result is shown: if one generates $\m{C}$ iid from isotropic Gaussian distribution,
then simultaneously for all $\Sigma_X$ the adaptive encoder~\eqref{eq:WMSEquantproblem} achieves
the following (parametric) rate-distortion:
\begin{align*} D_{rc}(R) &= {\sigma_W^2 \over n} \sum_{i=1}^n {\lambda_i\over 1 +
\lambda_iT}\\
   R &= {1\over 2n} \sum_{i=1}^n \log (1+ \lambda_i T)\,.
\end{align*}

First, observe that in the high-rate regime $R\gg 1$ (i.e. $T\gg 1$), we get
$$ D_{rc}(R) = |\Sigma_X|^{1/n} \sigma_W^2 2^{-2R} + O(2^{-4R})\,.$$
Comparing this to~\eqref{eq:Dhr}, we observe that in the high-rate regime this scheme attains the
optimal waterfilling distortion (upto higher order corrections), \textit{despite decoder not
knowing $\Sigma_X$}. Furthermore, analysis in~\cite{HOP26} shows that universally over all
possible $\Sigma_X$ the gap between waterfilling solution $D^*(R)$ and $D_{rc}(R)$ is never more
than $0.11$ bit.

This result demonstrates that the most significant part of the gap between~\eqref{eq:Diso}
and~\eqref{eq:waterfillingsolution} is not due to suboptimal codebook design, but rather due to
suboptimal rounding to that codebook. In other words, one must focus on implementing better
rounding schemes rather than optimizing codebooks. Unfortunately, as we will see in the next
section, even for the uniform quantization ($\mathbb{Z}^n$ codebook),
solving~\eqref{eq:WMSEquantproblem} is computationally hard. Thus, despite theoretical near-optimality of
isotropic codebooks, the algorithmic difficulty makes $\Sigma_X$-oblivious quantization
very interesting and rather underexplored. A practical low-complexity algorithm, known variously as
successive-interference cancellation (SIC), Babai's algorithm, GPTQ or LDLQ, will be discussed in
the next section.

\section{Weight quantization: Practice}\label{sec:weightonly_practice}

In this section we discuss approximate solutions for the WMSE quantization problem, that can be computed efficiently. We will describe several variants of the GPTQ/LDLQ algorithms~\cite{frantar2023optq,chee2023quip}, and propose novel improvements based on our information theoretic analysis. The schemes we discuss rely on a reformulation of the ``optimal rounding'' problem~\eqref{eq:WMSEquantproblem} using the Cholesky decomposition~\cite{chen2025geometry,birnick2025lattice}. We develop this reformulation in Subsection~\ref{subsec:cholseky}. The analysis of the quantization schemes that follow requires some background in high-resolution quantization using lattices. We provide this background in Subsection~\ref{subsec:latticequantHR}, and then in Subsection~\ref{subsec:SIC} we explain and analyze the SIC rounding algorithm. In its most basic form, this algorithm is equivalent to the canonical GPTQ/LDLQ~\cite{frantar2023optq,chee2023quip}, as was recently observed in~\cite{chen2025geometry,birnick2025lattice}. In light of the information theoretic analysis above, and the analysis of the SIC algorithm, it immediately becomes clear that a simple variation of GPTQ/LDLQ called \emph{WaterSIC}, that we develop in Subsection~\ref{sec:watersic}, provides an improved distortion and is provably quite close to the lower bound~\eqref{eq:Dhr} we developed on the WMSE. The WaterSIC algorithm is further developed in~\cite{lifar2026watersic}, and is also shown to perform remarkably well for low-rate weight only quantization of LLMs.

\subsection{Reformulation of~\eqref{eq:WMSEquantproblem} via Cholesky Decomposition}
\label{subsec:cholseky}

Using the Cholesky decomposition, we can decompose $\Sigma_X$ as $\Sigma_X=U^\top U$ where $U\in\RR^{n\times n}$ is an upper triangular matrix. Plugging this into~\eqref{eq:WMSEquantproblem} we obtain that given a fixed codebook $\m{C}$ the optimal quantization problem becomes
\begin{align}
\hat{W}^*=\argmin_{c\in\m{C}}\|Y-U\cdot c\|^2,~~\text{where}~Y=UW\in\RR^n. 
\label{eq:LatDecEq}
\end{align}
Denote
\begin{align}
e_Y=Y-U\hat{W}^*,~~~e_W=W-\hat{W}^*,    
\end{align}
and note that
\begin{align}
\hat{W}^*=W+e_W=W+U^{-1}e_Y.    
\end{align}
Recall that since $W\sim\m{N}(0,\sigma^2_W I_n)$, we have that $\Pr(W\notin
\sqrt{n(1+\eps)\sigma_W^2}\m{B})\to 0$ for any $\eps>0$, as $n\to\infty$, where
$\m{B}=\{x\in\RR^n~:~\|x\|\leq 1\}$. For high-resolution quantization, where $e_Y$ is so
small that $U^{-1} e_Y$ can also be assumed small, this will imply that $\hat{W}^*$ will
typically also fall inside a ball with radius $\sqrt{n(1+\eps_R)\sigma_W^2}\m{B}$, where
$\eps_R$ vanishes with $R$ (and $n$).  This fact will be useful in the analysis below.

Before proceeding further, we note that the diagonal entries of the Cholesky matrix $U$  will play here the
same role as eigenvalues of $\Sigma_X$ did in Section~\ref{sec:weightonly_theory}. In particular,
we will see that algorithms whose codebooks have equal density in each direction (SIC/GPTQ) attain error
	\begin{equation}\label{eq:dsic_final}
		D_{\mathrm{GPTQ}}(R) \approx {2\pi e\over 12} \sigma_W^2 \left({1\over n} \sum_i U_{i,i}^2\right) 2^{-2R}\,,
\end{equation}	
which depends on \textit{arithmetic} mean of squared entries of diagonal of $U$ (which
is highly dependent on permutation of columns of $\Sigma_X$ and application of random rotations).
Our new scheme (``WaterSIC'') spends rate more judiciously (more bits to coordinates with higher
$U_{i,i}$ and attains 
	\begin{equation}\label{eq:dwatersic_final}
		D_{\mathrm{WaterSIC}}(R) \approx {2\pi e\over 12} \sigma_W^2 \left(\prod_i U_{i,i}^2\right)^{1/n}
	2^{-2R}\,.
\end{equation}	
On a first sight, we only get the same AM-GM improvement as with / without waterfilling. However,
what is remarkable and much deeper is the fact that since $|\Sigma_X| = |U|^2$, we have $(\prod_i
U_{i,i}^2)^{1/n} = |\Sigma_X|^{1/n}$, thus recovering (within factor ${2\pi e\over 12}$, which
corresponds to Koshelev's famous 0.254 bit gap~\cite[Section 24.1.5]{PWbook24}) the
information-theoretic optimal waterfilling rate $D^*_{\HR}$, see ~\eqref{eq:Dhr}.

\subsection{Preliminaries on High-resolution quantization via shaped/entropy coded lattice quantizers} 
\label{subsec:latticequantHR}

\underline{Lattices:} We review some basic lattice definitions. See~\cite{ramiBook} for a comprehensive treatment of lattices in information theory. Any lattice $L\subset \RR^n$ has a (non-unique) generating matrix $G\in\RR^{n\times n}$ such that $L=G\ZZ^n$. The covolume of the lattice $L$, denoted $\mathrm{covol}(L)$) is defined as $| G|$. The point density of a lattice is $\gamma(L)=\covol^{-1}(L)=|G|^{-1}$. For a lattice $L\subset\RR^n$ we define the nearest neighbor quantizer $Q_L:\RR^n\to L$ as
\begin{align}
Q_{L}(x)=\argmin_{\lambda\in L}\|x-\lambda\|, 
\end{align}
where ties are broken arbitrarily, but in systematic manner. The Voronoi region $\m{V}_L$ is defined as the set of all points in $\RR^n$ that are closer to $0$ than to any other lattice point
\begin{align}
\m{V}_L=\left\{x\in\RR^n~:~Q_L(x)=0\right\}.    
\end{align}
The volume of the Voronoi region (as well as any other fundamental cell of $L$) is $\mathrm{Vol}(\m{V}_L)=\covol(L)=|G|$.

We define the second moment of the lattice $L$ as
\begin{align}
\sigma^2(L)=\frac{1}{n}\EE\|Z\|^2,    
\end{align}
where $Z\sim\Unif(\m{V}_L)$ is a random vector uniformly distributed over the Voronoi region of $L$. 

\medskip

\underline{High-resolution lattice quantizers:} Designing a quantizer for the WMSE problem with $\Sigma_X$-oblivious decoder entails choosing the
codebook $\m{C}$, which consists of $2^{nR}$ vectors in $\RR^n$. One way to construct such a
codebook is to start with an infinite constellation, in our case a lattice $L\subset\RR^n$, and
then choose from $L$ only $2^{nR}$ points~\cite{ramiBook}. This can be done by choosing a shaping
region $\m{S}\subset\RR^n$, e.g. $\m{S}=r\m{B}$ for some $r>0$, and taking $\m{C}=L\cap\m{S}$
where $\m{S}$ is dilated such that $|L\cap \m{S}|=2^{nR}$. Correspondingly, this way of thinking
about vector quantization splits the problem into design of good infinite constellations (with
prescribed point density) and shaping regions that chop off a finite part of it.

Many practical solutions for weight-only quantization that were considered in the literature fall
under the shaped lattice quantization paradigm: GPTQ with INT
constellations~\cite{frantar2023optq} as well as LDLQ with $E_8$-based codebooks as in
QuIP\#~\cite{chee2023quip,tseng2024quip}, and many more.

Assume the point $\hat{W}^*\in L$ that minimizes~\eqref{eq:LatDecEq} with respect to the entire
lattice $L$ (rather than $L\cap \m{S}$) is within the shaping region $\m{S}$. In this case,
$e_Y\in\m{V}_{UL}$ and if we further assume $e_Y\sim\Unif(\m{V}_{UL})$ the distortion is
$\sigma^2(UL)$. However, whenever the
\emph{overload} event $\hat{W}^*\notin\m{S}$ occurs, the squared error may significantly exceed
$\sigma^2(UL)$. In high-resolution quantization, we may assume $\hat{W}^*\approx W$, and the
overload probability is well approximated by $\Pr(W\notin\m{S})$. As mentioned above, taking
$\m{S}=\sqrt{n(1+\eps)\sigma_W^2}\m{B}$, with some small $\eps>0$ (or any other $\m{S}$ that
contains this ball) suffices for achieving vanishing overload probability. Thus, for such choice
of $\m{S}$ we have that $D\approx \sigma^2(UL)$.

If $L$ is ``sufficiently dense'' with respect to $\m{S}$, it holds that $|L\cap \m{S}|\approx \gamma(L)\cdot\Vol(S)$ where $\gamma(L)$ is the point density of $L$ (see~\cite[Lemma 3.3]{ordentlich2023bounds} for precise bounds on $|L\cap \m{S}|$). This approximation becomes accurate in the high-resolution limit (as high-resolution corresponds to large $\gamma(L)$).  
We can therefore approximate the rate as 
\begin{align}
R=\frac{1}{n}\log|L\cap \m{S}|\approx \frac{1}{n}\log(\Vol(\m{S}))+\frac{1}{n}\log \gamma(L).    
\label{eq:RvsDensity}
\end{align} 
It follows that in the high-resolution regime the quantization rate $R$ and the normalized log
point-density $\frac{1}{n}\log\gamma(L)$ are equal up to a constant. Furthermore, recalling that
$D\approx\sigma^2(UL)$ provided that $\Pr(W\notin\m{S})$ is sufficiently small, we can express $D$
in the form familiar to us. Specifically, we get from~\eqref{eq:RvsDensity} 
\begin{align}
&D\approx \left(\sigma^2(UL)\cdot \gamma^{\frac{2}{n}}(UL) \right)|U|^{\frac{2}{n}}\cdot \Vol^{\frac{2}{n}}(\m{S})\cdot 2^{-2R}\nonumber\\
&=\left(\sigma^2(UL)\cdot \gamma^{\frac{2}{n}}(UL) \right) |\Sigma_X|^{\frac{1}{n}}\cdot
\Vol^{\frac{2}{n}}(\m{S})\cdot 2^{-2R}\,,
\label{eq:ShapedLatRD}
\end{align}
where we also used the fact $\gamma(UL) = |U|^{-1} \gamma(L)$. The term in $(\cdot)$ is a famous
and extremely well-studied quantity~\cite{ramiBook} known as the normalized second moment (NSM) of $UL$, defined as $\sigma^2(UL)\cdot
\gamma^{\frac{2}{n}}(UL)$. From~\eqref{eq:ShapedLatRD} we see that the tradeoff between rate and distortion is
determined by: 1)The volume of the shaping region (which is required to satisfy
$\Pr(W\notin\m{S})\ll 1$);  2)The NSM of $UL$.

We stress that $\sigma^2(UL)$ is the WMSE distortion when the $\argmin_{c\in L}$ in~\eqref{eq:LatDecEq} is solved exactly. As we discuss below, finding the exact solution to this problem is generally infeasible for large $n$, and one must resort to approximate solutions. In the case where a sub-optimal quantizer $q:\RR^n\to L$ is used, the NSM is replaced with $\frac{1}{n}\EE\|Y-U\cdot q(Y)\|^2\cdot \gamma^{\frac{2}{n}}(UL)$ in~\eqref{eq:ShapedLatRD}.

An alternative approach for shaping is entropy coding (EC). If quantization with variable rate is allowed, and only the \emph{expected} quantization rate is constrained to be at most $R$, one can quantize the source to the point $\hat{W}^*\in L$ that minimizes~\eqref{eq:LatDecEq} (assuming $\m{C}=L$), and then describe the resulting point in bits using EC.

A sequence of classic works~\cite{bennett1948spectra,panter1951quantization,zador1982asymptotic,gersho79} have considered the case of high-resolution entropy coded quantization with infinite constellation, and subsequent work restricted attention to the case where the infinite constellation is taken as a lattice ~\cite{eyuboglu1993lattice,zamir1996lattice,ramiBook}. Those works considered the MSE case with $\Sigma_X=I_n$, but their conclusion adapted to the WMSE case is that for high-resolution quantization and large $n$ the tradeoff between distortion and average quantization rate also obeys~\eqref{eq:ShapedLatRD} with $\Vol^{\frac{2}{n}}(\m{S})$ replaced by $2^{2h(W)}=2\pi e\sigma_W^2$, where $h(\cdot)$ denotes differential entropy. Namely, for entropy coded high-resolution lattice quantization
\begin{align}
D\approx \left(\sigma^2(UL)\cdot \gamma^{\frac{2}{n}}(UL)\right) 2\pi e\cdot|\Sigma_X|^{\frac{1}{n}}\sigma_W^2\cdot 2^{-2R}.
\label{eq:ecdqRD}
\end{align}

Since rate and normalized lattice point density are equivalent under either shaped or entropy coded lattice quantization, it suffices to consider the tradeoff between WMSE distortion and $\gamma(L)$.

To that end, we now provide a lower bound on $\sigma^2(UL)$ that depends only on $|U|$ and on $\gamma(L)$. This bound is due to Zador~\cite{zador1982asymptotic} (See also~\cite[eq. (82)]{ConwaySloane}). It follows from the fact that $\sigma^2(UL)$ is the power of a random vector uniformly distributed over $\m{V}_{UL}\subset\RR^n$, which is a convex body of volume $V=|U|\cdot\gamma^{-1}(L)$. Among all bodies in $\RR^n$ of volume $V$, the power of a uniform random vector is minimized when the body is a $\ell_2$ ball.\footnote{In fact, it is known~\cite{ordentlich2025voronoi} that for almost all lattices (with respect to the natural measure on the space of lattices) $\sigma^2(L)$ is  only $(1+O(1/n))$ greater than the second moment of the corresponding $\ell_2$ ball. Thus, this lower bound is asymptotically attained by a ``typical'' lattice.}

\begin{proposition}[Zador]
For any full-rank lattice $L\subset\RR^n$ and any full-rank matrix $U\in\RR^{n\times n}$
\begin{align}
\sigma^2(UL)\geq \frac{\Gamma\left(\frac{n}{2}+1\right)}{(n+2)\pi}\gamma(L)^{-\frac{2}{n}}|U|^{\frac{2}{n}}\approx\frac{\gamma(L)^{-\frac{2}{n}}|U|^{\frac{2}{n}}}{2\pi e}.
\label{eq:DlbZador}
\end{align}
\label{lem:DlbZador}
\end{proposition}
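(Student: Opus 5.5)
The plan is to reduce the bound to the classical fact that, among all measurable sets of a given volume, the centered Euclidean ball minimizes the second moment $\int\|x\|^2dx$. We then compute that second moment explicitly.

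First, the lattice $UL$ has generating matrix $UG$, where $L=G\ZZ^n$. Hence $\covol(UL)=|U|\,|G|=|U|\gamma(L)^{-1}=:V$, and the Voronoi region $\m{V}_{UL}$ has volume $V$. We then have $\sigma^2(UL)=\frac{1}{nV}\int_{\m{V}_{UL}}\|x\|^2dx$.

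Second, let $B_r$ be the centered ball with $\Vol(B_r)=V$. The sets $A=\m{V}_{UL}\setminus B_r$ and $A'=B_r\setminus\m{V}_{UL}$ have equal volume. Every point of $A$ satisfies $\|x\|\ge r$, and every point of $A'$ satisfies $\|x\|< r$. Therefore
\begin{align*}
\int_{\m{V}_{UL}}\|x\|^2dx-\int_{B_r}\|x\|^2dx=\int_A\|x\|^2dx-\int_{A'}\|x\|^2dx\ge r^2\Vol(A)-r^2\Vol(A')=0.
\end{align*}
It follows that $\sigma^2(UL)\ge\frac{1}{nV}\int_{B_r}\|x\|^2dx$. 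This rearrangement step is the only conceptual point, and it is elementary. Convexity of the Voronoi cell is not even needed.

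Third, compute the ball's second moment. Writing $\kappa_n=\pi^{n/2}/\Gamma(\frac n2+1)$ for the volume of the unit ball, polar coordinates give $\int_{B_r}\|x\|^2dx=n\kappa_n\int_0^r t^{n+1}dt=\frac{n\kappa_n r^{n+2}}{n+2}$. Hence the bound equals $\frac{r^2}{n+2}$. From $\kappa_n r^n=V$ we get $r^2=V^{2/n}\kappa_n^{-2/n}=V^{2/n}\Gamma(\frac n2+1)^{2/n}/\pi$. This yields
\[
\sigma^2(UL)\ge\frac{\Gamma(\frac n2+1)^{2/n}}{(n+2)\pi}\,\gamma(L)^{-2/n}|U|^{2/n}.
\]
This matches the statement, reading the Gamma factor as raised to the power $2/n$; the exponent appears to have been dropped in the display.

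Finally, the approximation follows from Stirling's formula, $\Gamma(\frac n2+1)^{2/n}\approx\frac{n}{2e}$. Dividing by $(n+2)\pi$ gives $\approx\frac{1}{2\pi e}$ as $n\to\infty$.

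The main potential obstacle is bookkeeping only: keeping the Gamma exponent correct, and checking that $\covol(UL)=|U|\covol(L)$ uses $|\det U|$.
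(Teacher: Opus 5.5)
Your proof is correct and follows the same route as the paper: the Voronoi cell of $UL$ has volume $|U|\gamma(L)^{-1}$, and the centered ball of equal volume minimizes the second moment. You add an explicit rearrangement argument for that minimization, compute the ball's second moment in polar coordinates, and rightly note that the Gamma factor must carry the exponent $2/n$, i.e.\ the bound is $\frac{\Gamma(\frac{n}{2}+1)^{2/n}}{(n+2)\pi}\gamma(L)^{-2/n}|U|^{2/n}$; without that exponent the displayed bound would not match the stated $\frac{1}{2\pi e}$ asymptotics.
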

In~\eqref{eq:DlbZador}, $\Gamma(\cdot)$ is the gamma function. The last approximation can be replaced with a lower bound at the expense of multiplying the right-hand side term by a factor of $\frac{n}{n+2}$. Note that substituting~\eqref{eq:DlbZador} in~\eqref{eq:ecdqRD}, gives $D\gtrsim D_{\HR}^*(R)$, where $D_{\HR}^*(R)$ is given in~\eqref{eq:Dhr}. 

Even without relying on the validity of~\eqref{eq:ecdqRD}, Proposition~\ref{lem:DlbZador} provides a simple lower bound on the WMSE distortion of any high-resolution quantization scheme for which the reconstruction satisfies $\hat{W}\in L$, for a lattice $L\subset\RR^n$. This follows since for any such quantization scheme we have (due to~\eqref{eq:LatDecEq}) 
\begin{align}
D\geq \frac{1}{n}\min_{c\in L}\EE\|Y-U\cdot c\|^2=\frac{1}{n}\|Y-Q_{UL}(Y)\|^2, \nonumber   
\end{align}
where $Q_{UL}:\RR^n\to UL$ is the nearest neighbor (optimal) quantizer for the lattice $UL\subset\RR^n$. Under the high-resolution assumption $e_Y=Y-Q_{UL}(Y)\sim\Unif(\m{V}_{UL})$ and consequently 
\begin{align}
D\geq \sigma^2(UL)\gtrsim \frac{\gamma(L)^{-\frac{2}{n}}|U|^{\frac{2}{n}}}{2\pi e} = |\Sigma_X|^{\frac{1}{n}} \frac{\gamma(L)^{-\frac{2}{n}}}{2\pi e},
\label{eq:WMSElatLB}
\end{align}
where we have used Proposition~\ref{lem:DlbZador} and the fact that $|U|^2=|\Sigma_X|$.


For example, in GPTQ/LDLQ it is common to take $L=\alpha\ZZ^n$ as the base lattice, whose density is $\gamma(L)=\alpha^{-n}$. From~\eqref{eq:WMSElatLB} we see that its attained distortion must satisfy $D\gtrsim \alpha^2 \frac{|\Sigma_X|^{1/n}}{2\pi e}$. 

Note that~\eqref{eq:WMSElatLB} relies on the assumption $e=UW-Q_{UL}(UW)\sim\Unif(\m{V}_{UL})$. This holds asymptotically in the limit of high resolution ($\gamma(L)\to\infty$) for all full-rank matrices $U\in\RR^{n\times n}$. If one uses  dithered lattice quantization~\cite{ramiBook}, this assumption holds exactly for all $\gamma(L)$. However, in the case of dithered lattice quantization the resulting estimate $\hat{W}^*$ would not be in $L$. Instead, under dithered lattice quantization we have $\hat{W}^*=Y+e$ where $e\sim\Unif(\m{V}_{UL})$, $e\indep Y$. Consequently, one can benefit by setting $\hat{W}=\beta\hat{W}^*$ with $\beta<1$ as the estimate for $W$, as in this case
\begin{align}
\frac{1}{n}\EE(X^T&(W-\hat{W}))^2=\EE((1-\beta)X^\top W-\beta e)^2\nonumber\\
&=(1-\beta)^2 \sigma_W^2\frac{\trace(\Sigma_X)}{n}+\beta^2 \sigma^2(UL).    
\end{align}
This is the same shrinkage effect that we discussed in~\cite[Section I.b]{OP26part1}
Note that $\beta\to 1$ as $\sigma^2(UL)$ decreases, which is the case for high-resolution
quantization. 


\medskip

\subsection{Product Codebooks, Codebook Spacing, and Successive Cancellation}
\label{subsec:SIC}

This section discusses efficient quantization schemes via successive interference cancellation (SIC). The discussion is not restricted to lattice quantizers, and we will see that SIC can be applied for any product code. Afterwards, we will specialize the discussion to lattice quantizers.

In Section~\ref{sec:weightonly_theory} we argued that the relevant setup for LLMs is the uninformed/$\Sigma_X$-oblivious decoder case. We introduced there the problem of quantizing a single column of the weight matrix. In practice, however, a weight matrix will typically have $a\gg 1$ column vectors, and the WMSE matrix $\Sigma_X$ is common to all of them (as they all operate on the same activations). If the number of column vectors is of the same order as $n$, it is possible for the decoder to send an $O(n)$ bits description of $\Sigma_X$ to the uninformed decoder with only a negligible effect on the quantization rate. In particular, the encoder can apply a diagonal  matrix $\m{A}=\diag(\alpha_1,\ldots,\alpha_n)\in\RR^{n\times n}$, whose spacing coefficients $\{\alpha_i\}$ are chosen based on $\Sigma_X$, and quantize $W$ using the codebook $\m{A}\cdot\m{C}$ rather than the codebook $\m{C}$. The cost of reporting $(\alpha_1,\ldots,\alpha_n)$ to the decoder, is amortized over the $a$ columns of the weight matrix, which makes it negligible provided that $a$ is not too small. In fact, spacing the codebook $\m{C}$ used for the reconstruction of $W$ to the codebook $\m{A}\cdot\m{C}$ is equivalent to changing $X^\top$ to $X^\top \m{A}$. Consequently, in LLMs it is sometimes the case that $\m{A}$ can be absorbed in the activations using the layer-norm with no additional cost associated with describing the scales~\cite{xiao2023smoothquant}.

Our focus is on obtaining efficient approximate solution to the optimization in equation~\eqref{eq:WMSEquantproblem}, which becomes equivalent to the optimization in~\eqref{eq:LatDecEq} when $\Sigma_X=U^\top U$ is decomposed using the Cholesky decomposition. Assume the codebook $\m{C}\subset\RR^n$ is a \emph{product codebook} of the form 
\begin{align}
\m{C}=\m{C}_1\times\cdots\times\m{C}_n,~~\text{where}~\m{C}_i\subset\RR~ \forall i\in[n].
\end{align}
Product codebooks are a common practical choice as it allows for fast (and parallel) decoding. Under the MSE criterion (corresponding to $\Sigma_X=I_n$) they also result in fast optimal encoding, as in this case $U=I_n$ and~\eqref{eq:LatDecEq} becomes
\begin{align}
\hat{W}_{\text{MSE}}&=\argmin_{c_1\in\m{C}_1,\ldots,c_n\in\m{C}_n} \sum_{i=1}^n (W_i-c_i)^2\nonumber\\
~~\Longrightarrow \hat{W}_{\text{MSE},i}&=\argmin_{c_i\in\m{C}_i}(W_i-c_i)^2,~~~\forall i\in[n].   
\end{align}
Under the WMSE criterion, corresponding to non-diagonal $\Sigma_X$, product codebooks do not in general lend
themselves to fast optimal encoding algorithms. In particular, if
$\m{C}=\ZZ^n=\ZZ\times\cdots\times\ZZ$, then exactly solving the optimization
in~\eqref{eq:LatDecEq} requires solving the closest vector problem (CVP) in the lattice
$\tilde{L}=U\ZZ^n$, as observed in~\cite{chen2025geometry,birnick2025lattice}. This problem is
known to be NP-hard. Consequently, one must resort to sub-optimal algorithms.

Here, we restrict attention to successive
interference cancellation (SIC). The most general form of this algorithm is provided in Algorithm~\ref{alg:genSIC} and illustrated in Figure~\ref{fig:SICffedfe}. In \emph{generalSIC} the $n$ codebooks $\m{C}_1,\ldots,\m{C}_n\subset \RR$ whose product is $\m{C}\subset\RR^n$ can be arbitrary, and each of them is further scaled by the corresponding $\alpha_i$. This scaling can in general be absorbed in the codebooks definition. However, since we allow $\m{A}$ to depend on $\Sigma_X$ through the matrix $U\in\RR^{n\times n}$ while the codebooks $\m{C}_1,\ldots,\m{C}_n$ are not allowed to depend on $\Sigma_X$, we do not absorb $\m{A}$ into the codebooks.
In generalSIC, we first filter the vector $Y$ using the feedforward filter $F\in\RR^{n \times n}$. Any matrix $F\in\RR^{n\times n}$ can be used here. Then, the vector $\hat{W}$ is generated sequentially, starting from $\hat{W}_n$ up to $\hat{W}_1$. At every step $i$ the scalar $(FY)_i$ is fed to the quantizer to generate $\hat{W}_i$, and then the vector $FY$ is updated to $FY-\hat{W}_i B_{:,i}$. The feedback matrix $B\in\RR^{n\times n}$ is strictly lower triangular, due to causality.
We have that $\hat{W}=\m{A} Q\left(\m{A}^{-1}(FY-B\hat{W}) \right)$. If the quantizer is modeled as adding independent quantization noise $Z$, we therefore obtain \begin{align}
\hat{W}=(I_n+B)^{-1}FY+(I_n+B)^{-1}\m{A}Z.   
\end{align}
Finally, the estimate $\hat{W}$ is further scaled by $\beta>0$.

Let us temporarily fix the spacing matrix $\m{A}$. The choices of $F$ and $B$ and $\beta$, should strike a balance between two quantities. First, we want the $\ell_2$ norm of
\begin{align}
&e_Y=Y-\beta U\hat{W}\nonumber\\
&=\left(I_n-\beta U(I_n+B)^{-1}F\right)Y-\beta U(I_n+B)^{-1}\m{A} Z
\end{align}
to be as small as possible. On the other hand, we also want $\hat{W}$ to have differential entropy as small as possible, as this will dictate the volume of the shaping region required for capturing it (or equivalently, the average rate of encoding the quantizer's output using entropy coding).

However, in the limit of large quantization rate, where the energy of $Z$ vanishes, the optimal choice of $F,B,\beta$ tends to the solution for which $\left(I_n-\beta U(I_n+B)^{-1}F\right)=0$. This corresponds to choosing
\begin{align}
&\beta_{\HR}=1,~F_{\HR}=(\diag(U))^{-1},\nonumber\\
&~~~~~~~~B_{\HR}=(\diag(U))^{-1}U-I_n.
\label{eq:HRfilters}
\end{align}
If we use the generalSIC algorithm with $F,B,\beta$ from~\eqref{eq:HRfilters}, and use the same scalar codebook $\m{C}_0$ for quantizing all coordinates, the algorithm simplifies to Algorithm~\ref{alg:SIC} which we simply call the SIC algorithm.

As we shall see below, the choice of $\m{A}$ has an important impact on the scheme's performance.
In the special case where $\m{A}=\alpha I_n$ for some $\alpha>0$, the SIC algorithm becomes
completely equivalent to the canonical GPTQ algorithm (which in turn, is equivalent to the LDLQ
algorithm~\cite{chee2023quip}). This equivalence was recently shown
in~\cite{chen2025geometry,birnick2025lattice}. Thus, in the sequel, we refer to SIC with
$\m{A}=\alpha I_n$ as the GPTQ algorithm. Note however that the GPTQ algorithm was originally
presented in~\cite{frantar2023optq} from a \emph{noise-shaping} point of view, where the quantization
noise is filtered and fed to the next quantizer. In the SIC point of view, it is the signal $Y$, rather than the quantization noise, that is filtered and fed to the next quantizer. 

We find the SIC point of view of~\cite{chen2025geometry,birnick2025lattice} more intuitive than the original noise-filtering perspective. Furthermore, under the SIC point of view, the problem is also similar to V-BLAST decoding for the Gaussian MIMO channel~\cite{wolniansky1998v}.


\begin{algorithm*}[h!]
\caption{generalSIC}
\label{alg:genSIC}
\begin{algorithmic}
\State \textbf{Inputs:} $Y\in\RR^n$, feed-forward filter $F\in\RR^{n\times n}$, strictly upper triangular feedback filter $B\in\RR^{n\times n}$, diagonal spacing matrix $\m{A}=\diag(\alpha_1,\ldots,
\alpha_n)\in\RR_+^{n\times n}$, scaling coefficient $\beta>0$ and codebooks $\m{C}_1,\ldots,\m{C}_n\subset\RR$
\State \textbf{Outputs:} $\hat{W}\in \m{C}_1\times\cdots\times \m{C}_n$.
\vspace{2mm}
\State $Y\gets FY$
\For{$i=n:1$}
\State $\hat{W}_i\gets \alpha_iQ_i\left(\frac{Y_{i}}{\alpha_i}\right)$\Comment{$Q_i(x)=\argmin_{c_i\in\m{C}_i} (x-c_i)^2$}
\State $Y\gets Y-\hat{W}_i\cdot B_{:,i}$\Comment{$B_{:,i}$ is the $i$th column of $B$}
\EndFor
\State $\hat{W}\gets \beta \hat{W}$
\end{algorithmic}
\end{algorithm*}

\begin{algorithm*}[h!]
\caption{SIC}
\label{alg:SIC}
\begin{algorithmic}
\State \textbf{Inputs:} $Y\in\RR^n$, upper triangular $U\in\RR^{n\times n}$,  diagonal matrix $\m{A}=\diag(\alpha_1,\ldots,
\alpha_n)\in\RR_+^{n\times n}$ and scalar codebook $\m{C}_0\subset\RR$
\State \textbf{Outputs:} $c_{\mathrm{SIC}}\in \m{C}_0^{\otimes n}$
\vspace{2mm}
\For{$i=n:1$}
\State $c_{\mathrm{SIC},i}\gets \alpha_i Q\left(\frac{Y_{i}}{\alpha_i U_{i,i}}\right))$\Comment{$Q(x)=\argmin_{c\in\m{C}_0} (x-c)^2$}
\State $Y\gets Y-c_{\mathrm{SIC},i}\cdot U_{:,i}$\Comment{$U_{:,i}$ is the $i$th column of $U$}
\EndFor
\end{algorithmic}
\end{algorithm*}

\begin{figure*}
\centering

\begin{tikzpicture}[auto, >=latex', node distance=0.85cm, thick]

    \tikzset{block/.style = {draw, rectangle, minimum height=2.5em, minimum width=3em, align=center},
             sum/.style = {draw, circle, inner sep=0mm, minimum size=6mm}}


    \node (Y) at (0,0) {\Large $Y$};

    \node [block, right=of Y] (F) {\Large $F$};
    \node [sum, right=of F] (sum1) {\Large +};

    \node [block, right=of sum1] (A) {\Large $\m{A}^{-1}$};
    \node [block, right=of A] (Qnew) {\Large $Q(\cdot)$};
    \node [block, right=of Qnew] (Ainv) {\Large $\m{A}$};

    \node [block, right=of Ainv] (Beta) {\Large $\beta$};

    \node [block, right=of Beta] (U) {\Large $U$};
    \node [sum, right=of U] (sum2) {\Large +};

    \node [right=of sum2] (ey) {\Large $e_Y$};

    \node [block, below=0.75cm of Qnew] (B) {\Large $B$};


    \draw [->] (Y) -- (F);
    \draw [->] (F) -- (sum1);
    \draw [->] (sum1) -- (A);
    \draw [->] (A) -- (Qnew);
    \draw [->] (Qnew) -- (Ainv);

    \draw [->] (Ainv) -- node [midway, above] {\Large $\hat{W}$} (Beta);
    \draw [->] (Beta) -- (U);

    \draw [->] (U) -- node [midway, above] {\Large $\hat{Y}$} (sum2);
    \draw [->] (sum2) -- (ey);


    \draw [->] ($(Ainv.east)!0.5!(Beta.west)$) |- (B);
    \draw [->] (B) -| node[pos=0.95, left] {\Large -} (sum1);

    \draw [->] ($(Y.east)!0.5!(F.west)$) |- ++(0,-2.7) -| node[pos=0.95, left] {\Large -} (sum2);

\end{tikzpicture}

   \caption{Illustration of the generalSIC quantization algorithm. The matrix $F\in\RR^{n\times n}$ is unrestricted, whereas the matrix $B\in\RR^{n\times n}$ must be strictly upper triangular due to causality. The matrix $\m{A}\in\RR^{n\times n}$ is positive diagonal, and determines the spacing of the quantizer in each coordinate. The parameter $\beta>0$ is a scaling parameter. The GPTQ algorithm is obtained as a special case for $B,F,\beta$ taken as in~\eqref{eq:HRfilters}, and $\m{A}=\alpha I_n$. The WaterSIC algorithm is obtained with the same $F,B,\beta$, but with $\m{A}=\diag(\alpha_1^{\text{Water}},\ldots,\alpha_n^{\text{Water}})$ and $\alpha_i^{\text{Water}}$ are given in~\eqref{eq:alphaAMGM}}
  \label{fig:SICffedfe}
\end{figure*}

\medskip

\medskip


In practice, the sub-codebook $\m{C}_0$ is often taken as the constellations corresponding to data-type FP8, FP4, INT8, etc., resulting in $\hat{W}$ that can be used in fast MatMul hardware. For the remainder of our discussion we will assume $\m{C}_0=\ZZ$, such that the product codebook is a lattice. In this case, the SIC algorithm is merely a low-complexity sub-optimal  algorithm for the closest vector problem, which is often also referred to as Babai's nearest plane
algorithm~\cite{babai86}. For small $n$ the sphere-decoder~\cite{fincke1985improved,agrell2002closest} can be used, and for large $n$ other sub-optimal algorithms can equally be used instead of SIC.

We now analyze the distortion attained by the SIC algorithm, with $\m{C}_0=\ZZ$. The resulting $\hat{W}$ is in the lattice $\m{A}\ZZ^n$, whose cardinality is unbounded. The description of $\hat{W}$ in bits will be handled via shaping or entropy coding, and for now we only constrain the point density of $L=(\alpha_1\ZZ)\times\cdots\times(\alpha_n \ZZ)$ to $\gamma(L)^{-1}=\left(\prod_{i=1}^n\alpha_i\right)=\alpha^n$. Recall that from Proposition~\ref{lem:DlbZador} and~\eqref{eq:WMSElatLB} we have that even if the optimal solution to~\eqref{eq:LatDecEq} is found, the distortion must satisfy $D\gtrsim\alpha^2 \frac{|\Sigma_X|^{2/n}}{2\pi e}$. Note that for $\m{C}_0=\ZZ$, the quantizer $Q$ used in the SIC algorithm takes the simple form $Q(x)=\mathrm{round}(x)$ so that the execution of the SIC algorithm is particularly simple. The next result can be deduced from~\cite{babai86}, and we bring the simple proof for completeness.
\begin{lemma}
Assume we apply the SIC Algorithm with $y\in\RR^n$, upper triangular $U\in\RR^{n\times n}$, and $\m{C}_i=\alpha_i\ZZ$ for all $i\in[n]$. Then
\begin{align}
 e_{\mathrm{SIC}}=y-U\cdot c_{\mathrm{SIC}} \in \m{P}_{\{U_{i,i},\alpha_i\}_{i=1}^n},\nonumber
\end{align}
where
\begin{align}
\m{P}_{\{U_{i,i},\alpha_i\}_{i=1}^n}=\prod_{i=1}^n \left[-\frac{|\alpha_i U_{i,i}|}{2},\frac{|\alpha_i U_{i,i}|}{2} \right). \nonumber
\end{align}
If in addition $e_{\mathrm{SIC}}\sim\Unif(\m{P}_{\{U_{i,i},\alpha_i\}_{i=1}^n})$ we have that
\begin{align}
D_{\mathrm{SIC}}&\triangleq\frac{1}{n}\EE\|Y-U\cdot c_{\mathrm{SIC}}\|^2\nonumber\\
&=\frac{1}{12}\cdot\frac{1}{n}\sum_{i=1}^n (\alpha_i U_{i,i})^2.  
\label{eq:DsicError}
\end{align}
\label{lem:FundCellSIC}
\end{lemma}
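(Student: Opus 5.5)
The plan is to track the residual vector through the iterations of Algorithm~\ref{alg:SIC}, working backwards from $i=n$ to $i=1$. We use the upper-triangular structure of $U$ to show that once coordinate $i$ has been processed, the $i$th entry of the running residual never changes again. Concretely, let $y^{(n+1)}=y$ denote the initial vector and $y^{(i)}=y^{(i+1)}-c_{\mathrm{SIC},i}U_{:,i}$ the vector after step $i$. Then $y^{(1)}=y-\sum_{j}c_{\mathrm{SIC},j}U_{:,j}=y-U c_{\mathrm{SIC}}=e_{\mathrm{SIC}}$. Since $U$ is upper triangular, column $U_{:,j}$ is supported on coordinates $1,\ldots,j$. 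Hence the subtraction at step $j<i$ does not affect coordinate $i$, which gives $(e_{\mathrm{SIC}})_i=y^{(i)}_i=y^{(i+1)}_i-c_{\mathrm{SIC},i}U_{i,i}$.

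Next we use the rounding step. With $\m{C}_0=\ZZ$ and $Q=\mathrm{round}$, we have $c_{\mathrm{SIC},i}=\alpha_i k_i$, where $k_i=\mathrm{round}\big(y^{(i+1)}_i/(\alpha_iU_{i,i})\big)$. Writing $t=y^{(i+1)}_i/(\alpha_iU_{i,i})$, this gives $(e_{\mathrm{SIC}})_i=\alpha_iU_{i,i}(t-k_i)$. We take rounding to break ties so that $t-\mathrm{round}(t)\in[-1/2,1/2)$; the sign of $\alpha_iU_{i,i}$ is immaterial, since $\alpha_i>0$ and Cholesky gives $U_{i,i}>0$, and in any case one can use absolute values. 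Therefore $(e_{\mathrm{SIC}})_i\in[-|\alpha_iU_{i,i}|/2,|\alpha_iU_{i,i}|/2)$. Doing this for every $i$ places $e_{\mathrm{SIC}}$ in the box $\m{P}_{\{U_{i,i},\alpha_i\}_{i=1}^n}$. The only delicacy is the half-open interval convention at the boundary, which is fixed by the tie-breaking rule. I would remark that this box is a fundamental cell of $U\cdot\m{A}\ZZ^n$, although that fact is not needed for the statement.

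For the distortion formula, assume $e_{\mathrm{SIC}}\sim\Unif(\m{P})$. The coordinates of a uniform vector on a product of intervals are independent and each is uniform on its own interval. A uniform variable on an interval of length $\ell_i=|\alpha_iU_{i,i}|$ centered at zero has second moment $\ell_i^2/12$. Summing over $i$ and dividing by $n$ gives \eqref{eq:DsicError}.

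The main obstacle, mild as it is, is the bookkeeping in the first step: we must verify cleanly that coordinate $i$ of the residual is frozen after step $i$. This rests entirely on the columns $U_{:,j}$ vanishing below the diagonal.
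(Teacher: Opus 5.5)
Your proof is correct, but it takes a more direct route than the paper's. The paper does not track the coordinates of the residual. It argues structurally from two facts: (i) the zero decision region $\{y : c_{\mathrm{SIC}}(y,U)=0\}$ is exactly the box $\m{P}_{\{U_{i,i},\alpha_i\}_{i=1}^n}$; and (ii) SIC is equivariant under lattice shifts, $c_{\mathrm{SIC}}(y+U\m{A}z,U)=\m{A}z+c_{\mathrm{SIC}}(y,U)$ for all $z\in\ZZ^n$. Together these show that the decision regions are the translates $U\m{A}z+\m{P}_{\{U_{i,i},\alpha_i\}_{i=1}^n}$, which partition $\RR^n$. So whenever $y$ is decoded to $\m{A}z$, the error $y-U\m{A}z$ lies in the box.

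Your coordinate-freezing argument proves the containment in one pass and is more self-contained. The key step is that the columns $U_{:,j}$ vanish below row $j$, so $(e_{\mathrm{SIC}})_i$ is fixed at step $i$ and equals a scaled rounding residual. The paper states (i) and (ii) without proof, and (ii) in particular rests on exactly the triangular bookkeeping you carry out.

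What the paper's route buys is an extra conclusion: $\m{P}_{\{U_{i,i},\alpha_i\}_{i=1}^n}$ is a fundamental cell of $U\m{A}\ZZ^n$, with volume $\prod_i|\alpha_iU_{i,i}|=|U|\prod_i\alpha_i$. This makes the uniform-error hypothesis in the second part natural, and it ties the lemma to the covolume/point-density accounting used afterwards. You mention this fact and correctly note it is not needed for the statement as written.

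One small quibble concerns the boundary. For a general upper-triangular $U$ with $U_{i,i}<0$, the set $\alpha_iU_{i,i}\cdot[-\tfrac12,\tfrac12)$ equals $(-\tfrac{|\alpha_iU_{i,i}|}{2},\tfrac{|\alpha_iU_{i,i}|}{2}]$. So ``use absolute values'' needs a sign-dependent tie-breaking rule to produce the stated half-open box. This is a measure-zero issue, and the paper glosses over it too.
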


\begin{proof}
Let $c_{\mathrm{SIC}}(y,U)$ be the result of applying the SIC algorithm with inputs $y\in\RR^n$ and $U\in\RR^{n\times n}$. Denote $\m{A}=\diag(\alpha_1,\ldots,\alpha_n)\in\RR^{n\times n}$. The lemma follows from combining the two following observations:
\begin{enumerate}
\item $\m{P}_{\{U_{i,i},\alpha_i\}_{i=1}^n}=\left\{y\in\RR^n~:~c_{\mathrm{SIC}}(y,U)=0 \right\}$;
\item For any $z\in\ZZ^n$ it holds that $$c_{\mathrm{SIC}}(y+U \m{A} z,U)=\m{A}z+c_{\mathrm{SIC}}(y,U).$$
\end{enumerate}
Thus the SIC algorithm induces the partition of space to decision regions 
\begin{align}
\m{D}_{z}=U\m{A} z+\m{P}_{\{U_{i,i},\alpha_i\}_{i=1}^n},~\forall z\in\ZZ^n    
\end{align}
and $c_{\mathrm{SIC}}(y,U)=\m{A}z$ iff $y\in\m{D}_z$. Consequently, $e_{\mathrm{SIC}}=y-U\cdot c_{\mathrm{SIC}}(y,U)\in \m{P}_{\{U_{i,i},\alpha_i\}_{i=1}^n} $. The claim on $D_{\mathrm{SIC}}$ immediately follows from the product structure of $\m{P}_{\{U_{i,i},\alpha_i\}_{i=1}^n}$.
\end{proof}

It follows from~\eqref{eq:DsicError} that the distortion for the GPTQ algorithm followed by entropy coding is as given in~\eqref{eq:dsic_final}.

By the arithmetic-mean geometric-mean inequality (AM-GM) and~\eqref{eq:DsicError} we have
\begin{align}
&D_{\mathrm{SIC}}=\frac{1}{12}\cdot\frac{1}{n}\sum_{i=1}^n (\alpha_i U_{i,i})^2\nonumber\\
&\geq \frac{1}{12}\left(\prod_{i=1}^n{\alpha_i}\right)^{\frac{2}{n}}\left(\prod_{i=1}^n U_{i,i} \right)^{\frac{2}{n}} =\frac{\alpha^2}{12}|\Sigma_X|^{\frac{2}{n}}.
\label{eq:DsicBound}
\end{align}
The inequality above is achieved with equality iff the scaling factors are of the form
\begin{align}
\alpha_i^{\text{Water}}\triangleq \alpha\cdot \frac{|U|^{\frac{1}{n}}}{|U_{i,i}|},~~\forall i\in[n].
\label{eq:alphaAMGM}
\end{align}

\subsection{WaterSIC algorithm}
\label{sec:watersic}

Motivated by~\eqref{eq:DsicBound} and~\eqref{eq:alphaAMGM}, we propose Algorithm~\ref{alg:AMGM}: the WaterSIC weight-only quantization algorithm for a weight matrix $W\in\RR^{n\times a}$. The properties of the resulting reconstruction $\hat{W}$ are provided in Lemma~\ref{lem:SICAMGM}.

\begin{algorithm*}[h!]
\caption{WaterSIC weight-only quantization}
\label{alg:AMGM}
\begin{algorithmic}
\State \textbf{Inputs:} $W\in\RR^{n\times a}$, PSD matrix $\Sigma_X$ and point density $\alpha>0$.
\State \textbf{Outputs:} $Z_{\mathrm{SIC}}\in \ZZ^{n\times a}$ and $(\alpha_1,\ldots,\alpha_n)\in\RR_+^n$ such that $\hat{W}=\diag(\alpha_1,\ldots,\alpha_n) Z_{\mathrm{SIC}}$.
\vspace{2mm}
\State Compute upper-triangular $U\in\RR^{n\times n}$ such that $\Sigma_X=U^\top U$\Comment{Using the Cholesky decomposition}
\State $Y\gets U W$
\State $\alpha_i\gets\alpha\cdot \frac{|U|^{\frac{1}{n}}}{|U_{i,i}|},~~\forall i\in[n]$
\State $Z_{\mathrm{SIC}}\gets 0^{n\times a}$\Comment{Initialize $Z_{\mathrm{SIC}}$ with zeros}
\For{$i=n:1$}
\State $Z_{\mathrm{SIC},i,:}\gets \mathrm{round}\left(\frac{Y_{i,:}}{\alpha_i U_{i,i}}\right)$ \Comment{$Y_{i,:}$ and $Z_{\mathrm{SIC},i,:}$ is the $i$th row of $Y$ and $Z_{\mathrm{SIC}}$, respectively}
\State $W\gets Y-\alpha_iU_{:,i}\cdot Z_{\mathrm{SIC},i,:}$\Comment{$U_{:,i}$ is the $i$th column of $U$}
\EndFor
\end{algorithmic}
\end{algorithm*}

\begin{lemma}
\label{lem:SICAMGM}
The reconstruction $\hat{W}=\diag(\alpha_1,\ldots,\alpha_n) Z_{\mathrm{SIC}}$ produced by the WaterSIC weight-only quantization algorithm satisfies: 
\begin{enumerate}
\item Any column $\hat{W}_{:,j}$ of $\hat{W}$ belongs to the lattice $L=(\alpha_1\ZZ)\times\cdots\times(\alpha_n\ZZ)$ whose density is $\gamma(L)=\alpha^{-n}$
\item $U(W-\hat{W})\in \alpha|\Sigma_X|^{1/2n}\cdot \left[-\frac{1}{2},\frac{1}{2}\right)^{n\times a}$
\item If we further assume $U(W-\hat{W})\sim\Unif\left( \alpha|\Sigma_X|^{1/2n}\cdot \left[-\frac{1}{2},\frac{1}{2}\right)^{n\times a}\right)$ then
\begin{align}
&D_{\mathrm{WaterSIC}}=\frac{1}{an}\EE\|X^\top(W-\hat{W})\|^2=\frac{\alpha^2|\Sigma_X|^{1/n}}{12}.    \nonumber
\end{align}
\item $\hat{W}\in W+\alpha|\Sigma_X|^{1/2n}\cdot U^{-1}\left[-\frac{1}{2},\frac{1}{2}\right)^{n\times a}$
\item $Z_{\mathrm{SIC}}\in\m{A}^{-1}W+\tilde{U}\left[-\frac{1}{2},\frac{1}{2}\right)^{n\times a}$ where $\m{A}=\diag(\alpha_1,\cdots,\alpha_n)$ and $\tilde{U}=\alpha|\Sigma_X|^{1/2n}\cdot(U\m{A})^{-1}$ is an upper triangular matrix with unit determinant.
\end{enumerate}
\end{lemma}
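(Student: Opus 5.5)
The plan is to reduce everything to Lemma~\ref{lem:FundCellSIC}, applied column by column, using the specific spacing \eqref{eq:alphaAMGM}. First, I observe that Algorithm~\ref{alg:AMGM} acts independently on each column $j\in[a]$. All of its operations are row-wise on $Y=UW$, with the update $Y\gets Y-\alpha_i U_{:,i} Z_{\mathrm{SIC},i,:}$ (the line written as ``$W\gets$'' being read as updating $Y$). Restricted to column $j$, it is exactly the SIC Algorithm~\ref{alg:SIC} with input $y=UW_{:,j}$, codebook $\m{C}_0=\ZZ$ and spacings $\alpha_i$, and its output is $c_{\mathrm{SIC}}=\m{A}Z_{\mathrm{SIC},:,j}=\hat W_{:,j}$.

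Item 1 is then immediate, since $\hat W_{:,j}\in\m{A}\ZZ^n=L$. The density follows from $\prod_i\alpha_i=\alpha^n|U|/\prod_i|U_{i,i}|=\alpha^n$, because $U$ is triangular and so $|U|=\prod_i|U_{i,i}|$.

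For item 2, Lemma~\ref{lem:FundCellSIC} gives $UW_{:,j}-U\hat W_{:,j}\in\prod_i[-|\alpha_iU_{i,i}|/2,|\alpha_iU_{i,i}|/2)$. With the spacing \eqref{eq:alphaAMGM}, each side length $|\alpha_iU_{i,i}|$ equals $\alpha|U|^{1/n}=\alpha|\Sigma_X|^{1/2n}$, so the box is a cube. Stacking the columns gives the matrix statement.

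For item 3, the first step is to rewrite the distortion as $\frac1{an}\tr(W-\hat W)^\top U^\top U(W-\hat W)=\frac1{an}\|U(W-\hat W)\|_F^2$, as in \eqref{eq:wot_1}. Under the uniformity assumption each of the $na$ entries has second moment $(\alpha|\Sigma_X|^{1/2n})^2/12$, which gives $\alpha^2|\Sigma_X|^{1/n}/12$. This is also \eqref{eq:DsicError} with equality in \eqref{eq:DsicBound}.

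Item 4 follows by multiplying item 2 by $U^{-1}$, which exists because $\Sigma_X$ is assumed full rank. One sign convention is needed: $\hat W-W=-U^{-1}e$, and the cube is symmetric only up to boundary, so I would either note that $-[-\tfrac12,\tfrac12)$ differs from $[-\tfrac12,\tfrac12)$ only on a null set, or track the half-open endpoints by writing $W-\hat W$. This endpoint issue is the only delicate point.

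Item 5 comes from applying $\m{A}^{-1}$ to item 4: $Z_{\mathrm{SIC}}=\m{A}^{-1}\hat W\in\m{A}^{-1}W+\alpha|\Sigma_X|^{1/2n}\m{A}^{-1}U^{-1}[\cdot)$, and $\m{A}^{-1}U^{-1}=(U\m{A})^{-1}$. This matrix is upper triangular, as an inverse of an upper triangular matrix. Its determinant is $\alpha^n|\Sigma_X|^{1/2}\cdot(\alpha^n|U|)^{-1}=1$, using $\det\m{A}=\alpha^n$ from item 1 and $|U|=|\Sigma_X|^{1/2}$. In fact its diagonal entries are $\alpha|U|^{1/n}/(\alpha_iU_{i,i})=\pm1$, so it is even unit-diagonal up to signs.

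The main obstacle is bookkeeping rather than substance. The two places to be careful are the column-separability reduction to Lemma~\ref{lem:FundCellSIC} (including the typo in the update line) and the sign and half-open conventions in items 4 and 5.
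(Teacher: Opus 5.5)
Your proposal is correct and is essentially the paper's argument. The paper simply declares the lemma immediate from Lemma~\ref{lem:FundCellSIC}. You carry out exactly that column-by-column reduction: the spacing \eqref{eq:alphaAMGM} turns the box into a cube of side $\alpha|\Sigma_X|^{1/2n}$, and $\prod_i\alpha_i=\alpha^n$ gives both the density and $\det\tilde U=1$. Your side remarks are also accurate: the update line of Algorithm~\ref{alg:AMGM} should read $Y\gets\cdots$, and items 4--5 hold literally only with $-U^{-1}$ in place of $U^{-1}$ (equivalently, with $(-\frac{1}{2},\frac{1}{2}]$), a discrepancy confined to rounding ties.
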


The proof is immediate in light of Lemma~\ref{lem:FundCellSIC}. Using~\eqref{eq:RvsDensity}, we see that if the resulting output $Z_{\mathrm{SIC}}$ is encoded to bits via ball-shaping or entropy coding, the relation between $\alpha$ and quantization rate $R$ is
\begin{align}
\alpha\approx\sqrt{2\pi e \sigma_W^2  2^{-2R
}}.    
\end{align}
Substituting this into item 3 of Lemma~\ref{lem:SICAMGM}, gives~\eqref{eq:dwatersic_final}. See~\cite[Theorem 3.3]{lifar2026watersic} for a precise statement. Remarkably, the waterfilling choice of scaling factors~\eqref{eq:alphaAMGM} result in a lattice whose distortion under the very simple SIC algorithm is only a factor of $\frac{2\pi e}{12}\approx 1.4233$ from the lower bound~\eqref{eq:WMSElatLB}. The loss of this factor is due to the fact that, while the weighted error $U(W-\hat{W})$ lies within a convex set in isotropic position, this set is a cube. The lower bound~\eqref{eq:WMSElatLB} on the other hand is achieved with equality only if the convex set supporting the weighted error is a $\ell_2$-ball. The decision regions for the WaterSIC algorithm, as well as for the optimal lattice quantizer corresponding to $L=(\alpha_1\ZZ)\times(\alpha_2\ZZ)$ are illustrated in Figure~\ref{fig:tiling}.


\begin{figure}[t]
  \centering
  \includegraphics[width=\columnwidth]{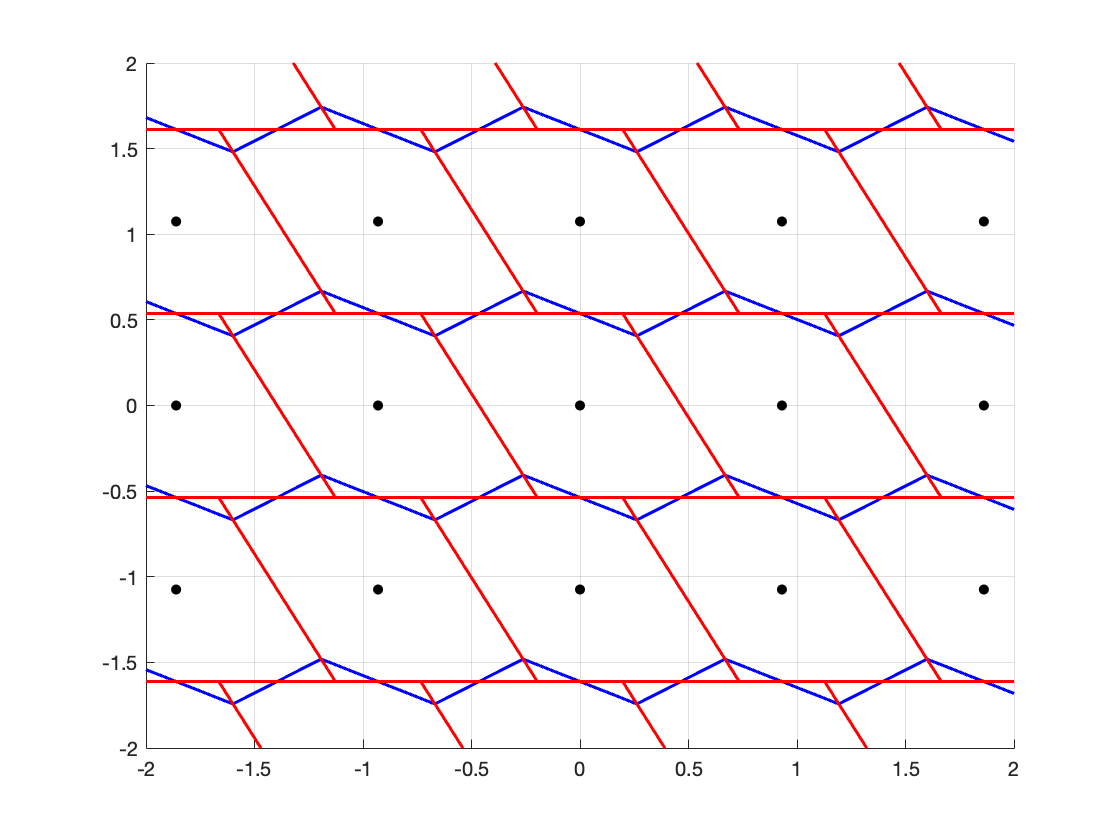}
  \caption{Illustration of the quantization regions for the optimal lattice quantizer (blue) and for the WaterSIC lattice quantizer (red) for $\Sigma_X=V \Lambda V^T$ where $V=[1~1;1~-1]/\sqrt{2}$ and $\Lambda=\diag(3,1)$. The lattice used for quantization is $L=(\alpha_1\ZZ)\times(\alpha_2 \ZZ)$ where $\alpha_1,\alpha_2$ are determined via~\eqref{eq:alphaAMGM}, with $\alpha=1$.}
  \label{fig:tiling}
\end{figure}

\begin{figure}[t]
  \centering
  \includegraphics[width=\columnwidth]{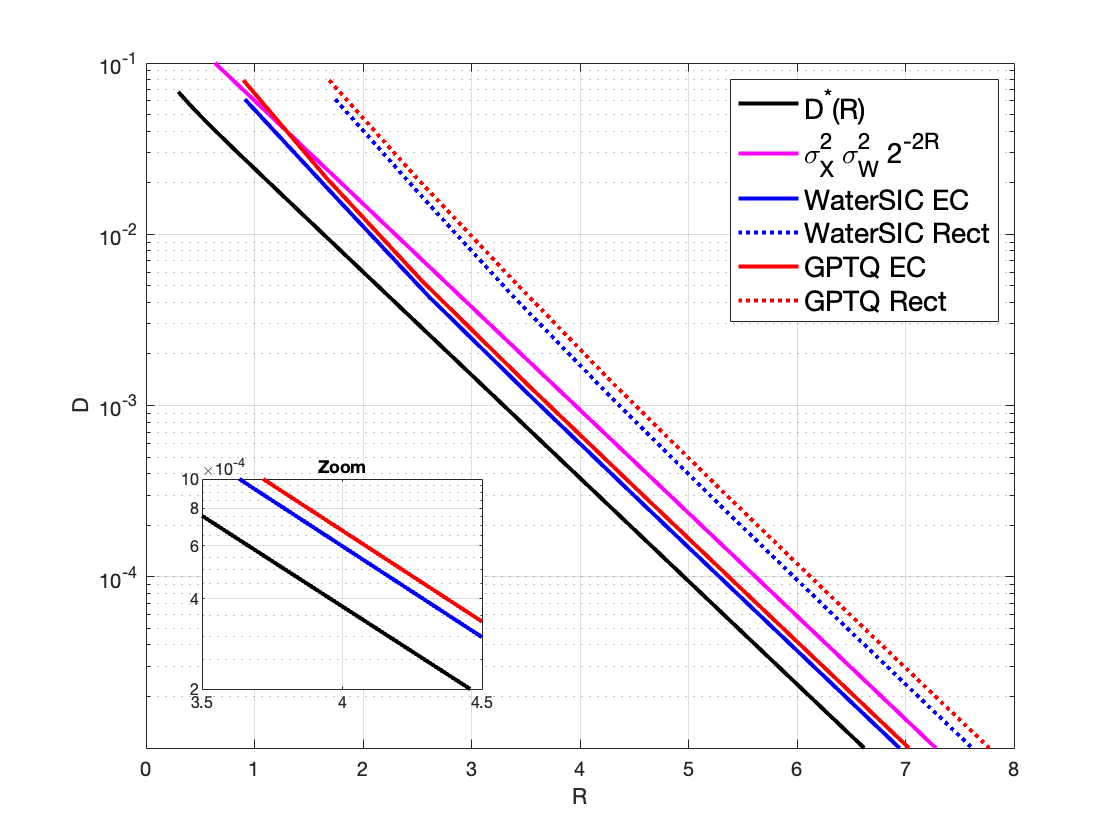}
  \caption{Performance of several weight-only quantization schemes against benchmark. Here
  $W\sim\m{N}(0,I_n)$ where $n=4096$ and $\Sigma_X$ is the empirical covariance matrix computed from  activation samples corresponding to $W_v$ in the $15$th layer of Llama3-8B.}
  \label{fig:sic_rd}
\end{figure}

\begin{figure*}[t] 
    \centering
    
    \begin{subfigure}[b]{0.48\textwidth}
        \centering
        \includegraphics[width=\linewidth]{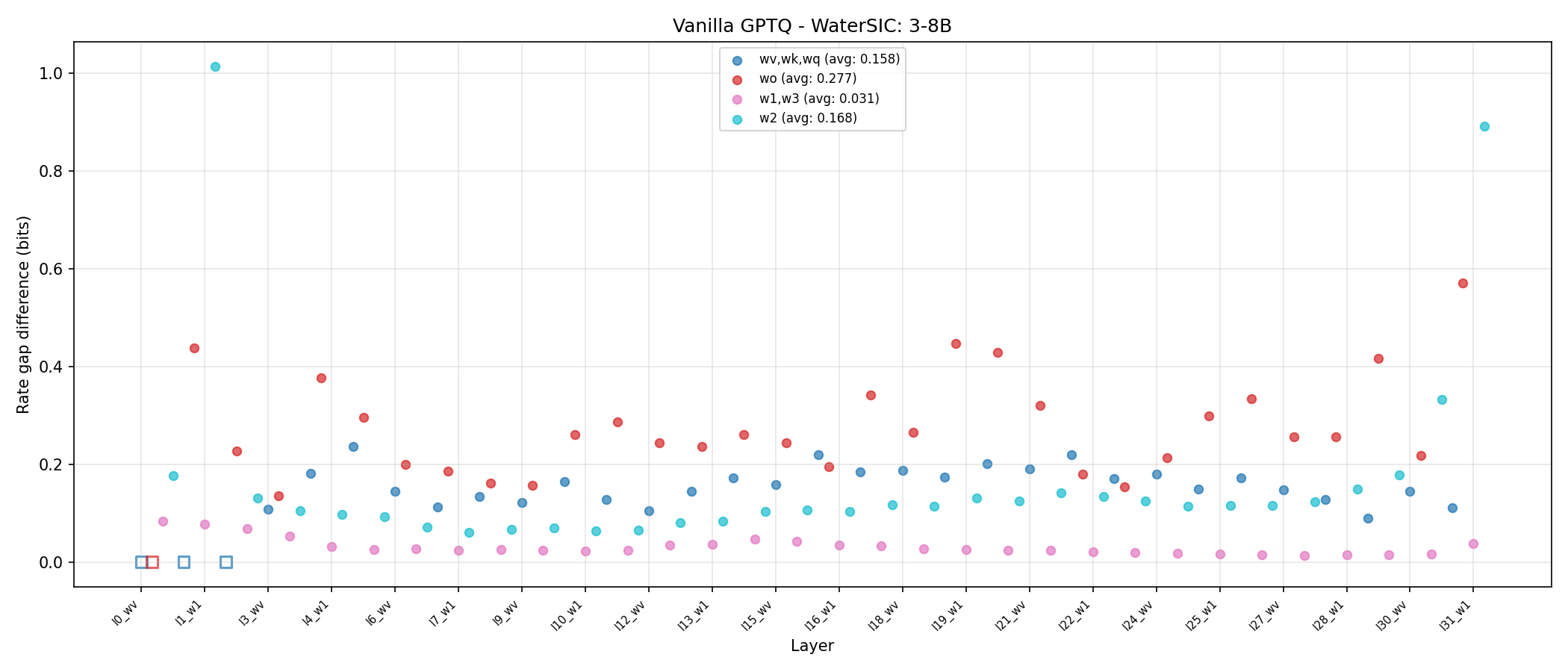}
        \caption{No random rotation.}
    \end{subfigure}
    \hfill 
    \begin{subfigure}[b]{0.48\textwidth}
        \centering
        \includegraphics[width=\linewidth]{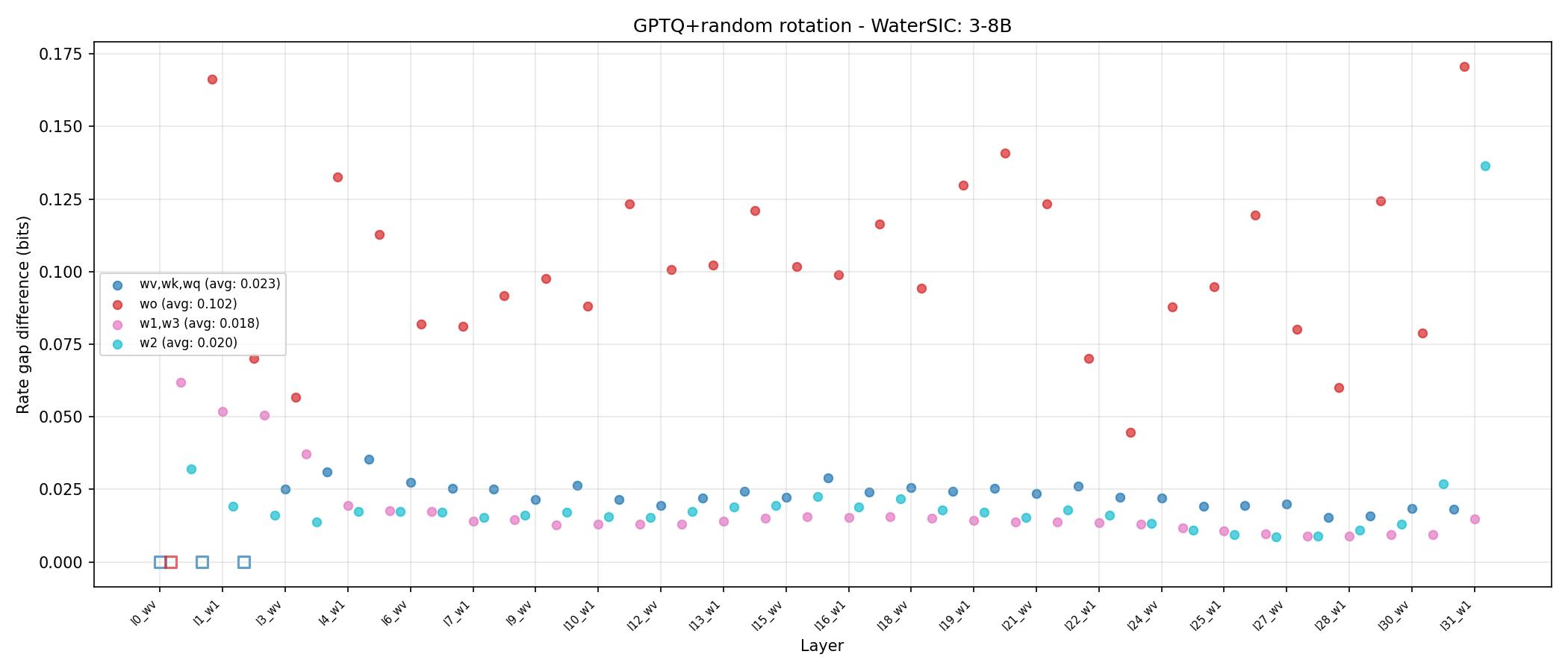}
        \caption{With random rotation.}
    \end{subfigure}
    
    \caption{Illustrating rate advantage of WaterSIC over SIC for $\Sigma_X$ of activations entering various layers of Llama-3-8B when
    processing Wikitext-2 dataset. Squares correspond to layers with singular $\Sigma_X$ which we
    skip. Note that WaterSIC achieves (on random gaussian $W$) identical performance with or
    without rotation.}
    \label{fig:llama_chol}
\end{figure*}

\textbf{Shaping:} There are several ways to represent $Z_{\mathrm{SIC}}$ in bits. One way that is simple, but quite inefficient, is using the shaping region
\begin{align}
\m{S}_{\text{rect}}&=[-q_1,q_1]\times\cdots\times[-q_n,q_n]\nonumber\\
&~~\text{where}~q_i=\|Z_{\text{SIC}i,:}\|_{\infty},~~\forall i\in[n].  \label{eq:rectShaping}
\end{align}
By definition, all entries of the $i$th row of $Z_{\mathrm{SIC}}$ are in $[-q_i,q_i]$, so overload never occurs. Furthermore, the encoder can describe the shaping region to the decoder by sending $\{q_i\}_{i=1}^n$, with negligible rate if the number of rows is large. And the quantization rate using $\tilde{S}_{\text{rect}}$ is $R_{\text{rect}}=\frac{1}{n}\sum_{i=1}^n\log(1+2q_i)$. Here, we have used the same reconstruction alphabet for an entire row of $W$. In general, one can decide on different ways to group elements that are described using the same reconstruction alphabet. While using the optimal group size can significantly boost performance, here we do not explore this. 

A better, but more complicated, way to perform shaping is via partitioning to cosets, much like in Forney's trellis shaping~\cite{forney1992trellis,eyuboglu1993lattice}. In particular, one can use a lattice $L_{\text{Shaping}}\subset\ZZ^n$, with $\covol(L_{\text{Shaping}})=2^{nR}$, such that $|\ZZ^n/L_{\text{Shaping}}|=2^{nR}$. In other words, $L_{\text{Shaping}}$ partitions $\ZZ^n$ to $2^{nR}$ cosets $\{z_i+L_{\text{Shaping}}\}$, where $z_1,\ldots,z_{2^{nR}}\in\ZZ^n$ are coset representatives.
In order to encode $Z_{\mathrm{SIC},:,j}$ (the $j$th column of $Z_{\mathrm{SIC}}$) the encoder describes the coset it belongs to, using $nR$ bits. The decoder outputs the coset member $z\in\ZZ^n$ for which $\sum_{i=1}^n \alpha_i^2 z_i^2$ is minimal. It is easy to verify that the reconstruction constellation corresponding to this scheme is $\ZZ^n\cap\tilde{\m{S}}_{\text{Lattice-Shaping}}$ where
\begin{align}
\tilde{\m{S}}_{\text{Lattice-Shaping}}=\bigg\{y\in\RR^n~&:~\sum_{i=1}^n\alpha_i^2 y_i^2 \leq \sum_{i=1}^n\alpha_i^2 (y_i-t_i)^2 \nonumber\\
&~~~\forall t\in L_{\text{Shaping}} \bigg\}.    
\end{align}
The difficulty here is that the decoder needs to find the coset member that minimizes the scaled energy, which may be computationally intensive unless $L_{\text{Shaping}}$ admits special properties. Some progress in finding $L_{\text{Shaping}}$ that admit efficient decoding on GPUs was made in~\cite{savkin2025nestquant,tseng2024qtip}. There are many other options one can consider for shaping, which we did not list here.

Finally, the encoder may simply use entropy coding (implemented via any lossless compression package. Note that modern GPU already include dedicated hardware for fast lossless compression/decompression) to describe the entries of $Z_{\mathrm{SIC}}$.

\subsection{Quantizing Llama-3-8B} 

In Figure~\ref{fig:sic_rd} we plot the rate-distortion curves
attained by WaterSIC and GPTQ without tailored spacing (that is, with $\alpha_i=\alpha$) under
entropy coding or rectangular shaping corresponding to shaping with $\m{S}_{\text{rect}}$
from~\eqref{eq:rectShaping}. We see that for high rates the ratio between WaterSIC EC and the
fundamental limit $D^*(R)$ from~\eqref{eq:waterfillingsolution} is indeed quite close to
$\frac{2\pi e}{12}\approx 1.4233$ (or 0.25 bit in rate). We also see that for this particular $\Sigma_X$, at high
resolution WaterSIC EC offers only limited gain with respect to GPTQ EC. However, the former is
provably close to $D^*(R)$ at high-resolution, whereas the latter is not, and therefore the gap
between them may be bigger for other layers, other LLMs, other calibration sets etc.

Let us investigate difference between GPTQ and WaterSIC (entropy coded versions of both) further.
Recall that in the high-rate regime, the rate advantage of WaterSIC over GPTQ is estimated as half logarithm
of the ratio between arithmetic and geometric mean of squared diagonal entries of $U$,
cf.~\eqref{eq:dsic_final} and~\eqref{eq:dwatersic_final}.  We illustrate this rate
advantage on Fig.~\ref{fig:llama_chol} for all layers of Llama-3-8B. We compare both the case of
using (Cholesky decomposition of the) original $\Sigma_X$ as well as $V^T \Sigma_X V$ for a random
orthogonal $V$. Recall that WaterSIC's performance is independent of rotation, and hence identical
in both figures, but the improvement over SIC is much smaller with random rotation. Let us
consider implications of this finding.

\textit{Cholesky factor interpretation.} Recall that $k$-th diagonal entry $U_{k,k}$ can be interpreted as follows. Think of
activations $(X_1,\ldots,X_n)$ as a stochastic process. Then, each sample $X_k$ can be
decomposed as 
$$ X_k = X_k^{\|} + X_k^{\perp}\,,$$
where $X_k^{\|}$ is the component that is a linear combination of $(X_1,\ldots,X_{k-1})$
(expressible in terms of $U_{k,j}, j = 1,\ldots,k-1$) and $X_k^{\perp}$ is the \emph{orthogonal
innovation} contained in $X_k$ compared to previous coordinates. $U_{k,k}^2 =
\EE[(X_k^{\perp})^2]$ is simply the energy of this innovation. Now, applying trace to the 
identity $\Sigma_X = U^\top U$ we get
$$ \sum_{k,j} U_{k,j}^2 = \sum_{j=1}^n \lambda_j\,. $$
Therefore, the average ${1\over n} \sum_k U_{k,k}^2$, that governs SIC's
performance~\eqref{eq:dsic_final}, equals ${1\over n} \sum_j \lambda_j$ minus the sum of squares of
off-diagonal entries of $U$. This implies, that GPTQ's performance is \textit{the worst} in the PCA
basis (where $U$ becomes diagonal), which is counter-intuitive. We also see that
the gap between $D_{iso}$ ($\Sigma_X$-oblivious encoder and decoder) and the waterfilling $D^*$,
which operates in the PCA-basis, is
much higher than for GPTQ vs WaterSIC (compare gaps on Fig.~\ref{fig:llama_ev} vs
Fig.~\ref{fig:llama_chol}).

\textit{Privileged basis.} On the other hand, as we see from
comparing two figures on on Fig.~\ref{fig:llama_chol} applying random rotation clearly improves
performance of GPTQ, thus decreasing $\sum_k U_{k,k}^2$. This implies that the original basis is
``closer'' to the diagonalizing PCA basis than a randomly rotated one.
This suggests that the standard basis for
$X$ is somehow priviliged among all other possible ones, an effect whose origins are most likely
due to operation of Adam~\cite{elhage2023mathematical}, which introduces axis aligned outliers in
activations.

\textit{Cholesky in random basis.} Can we estimate $\sum_k U_{k,k}^2$ from knowing the spectrum
$\{\lambda_j\}$ of $\Sigma_X$? The answer is affirmative if one considers applying random
rotation. One such estimate was offered (for a restricted class of $\Sigma_X$) by~\cite[Lemma
2]{chee2023quip}. However, one can get a more precise information about this sum.

Indeed, when we take $V$
to be a random orthogonal matrix and set $\Sigma_X = V^T \diag\{\lambda_j\} V$, quantities
$\{U_{k,k}\}$ become functions of a random matrix $V$. One can show that\footnote{Simply notice
that $\Sigma_X^{(k)} = U^{(k)\top} U^{(k)}$.} 
$$ U_{k,k}^2 = {|\Sigma_X^{(k)}|\over |\Sigma_X^{(k-1)}|}\,,$$
where $|\Sigma_X^{(k)}|$ is a determinant of a $k \times k$ principal minor of $\Sigma_X$. In dimensions of interest,
these random quantities concentrate quickly around their expectations, which can be computed
as\footnote{To see this, let $\tilde V$ be a $n\times k$ matrix consisting of first $k$ columns of
$V$, then from Cauchy-Binet $|\Sigma_X^{(k)}|=|(\Lambda^{1\over 2} \tilde V)^\top
(\Lambda^{1\over2} \tilde V)| =
\sum_{|S|=k} \prod_{i\in S} \lambda_i |\tilde V_{S\times [k]}|^2$. Taking expectation over $V$ we
have from symmetry $\EE[|\tilde V_{S\times [k]}|^2] = c$ for all subsets $S\subset [n]$. When all
$\lambda_i=1$ we must have $|\Sigma_X^{(k)}|=1$ and hence $c={n\choose k}^{-1}$.}
$$ \EE[|\Sigma_X^{(k)}|] = {1\over {n\choose k}} \sum_{|S|=k} \prod_{i\in S} \lambda_i\,.$$
Thus, we can approximate:
\begin{equation}\label{eq:ukk_approx}
	U_{k,k}^2 \approx {k \over n-k+1}  {\sum_{|S|=k} \prod_{i\in S} \lambda_i\over \sum_{|S|=k-1}
\prod_{i\in S} \lambda_i}\,.
\end{equation}
In particular, $U_{1,1}^2 \approx {1\over n} \sum_{j=1}^n \lambda_j$ and $U_{n,n}^2 \approx
{n\over \sum_{j=1}^n \lambda_j^{-1}}$ and the intermediate values smoothly decrease from the
arithmetic mean to harmonic mean. This fact is numerically illustrated on
Fig.~\ref{fig:random_chol}. Consequently, performance of GPTQ with random rotation can be
accurately estimated from combining~\eqref{eq:dsic_final} and~\eqref{eq:ukk_approx}. It is an
interesting open problem to estimate worst possible gap (over possible spectra $\lambda_j \ge
\epsilon$)
between GPTQ with rotation and WaterSIC (which in turn is $0.25$-bit away from
information-theoretically optimal waterfilling). 

\begin{figure*}[t] 
    \centering
        \includegraphics[width=.48\linewidth]{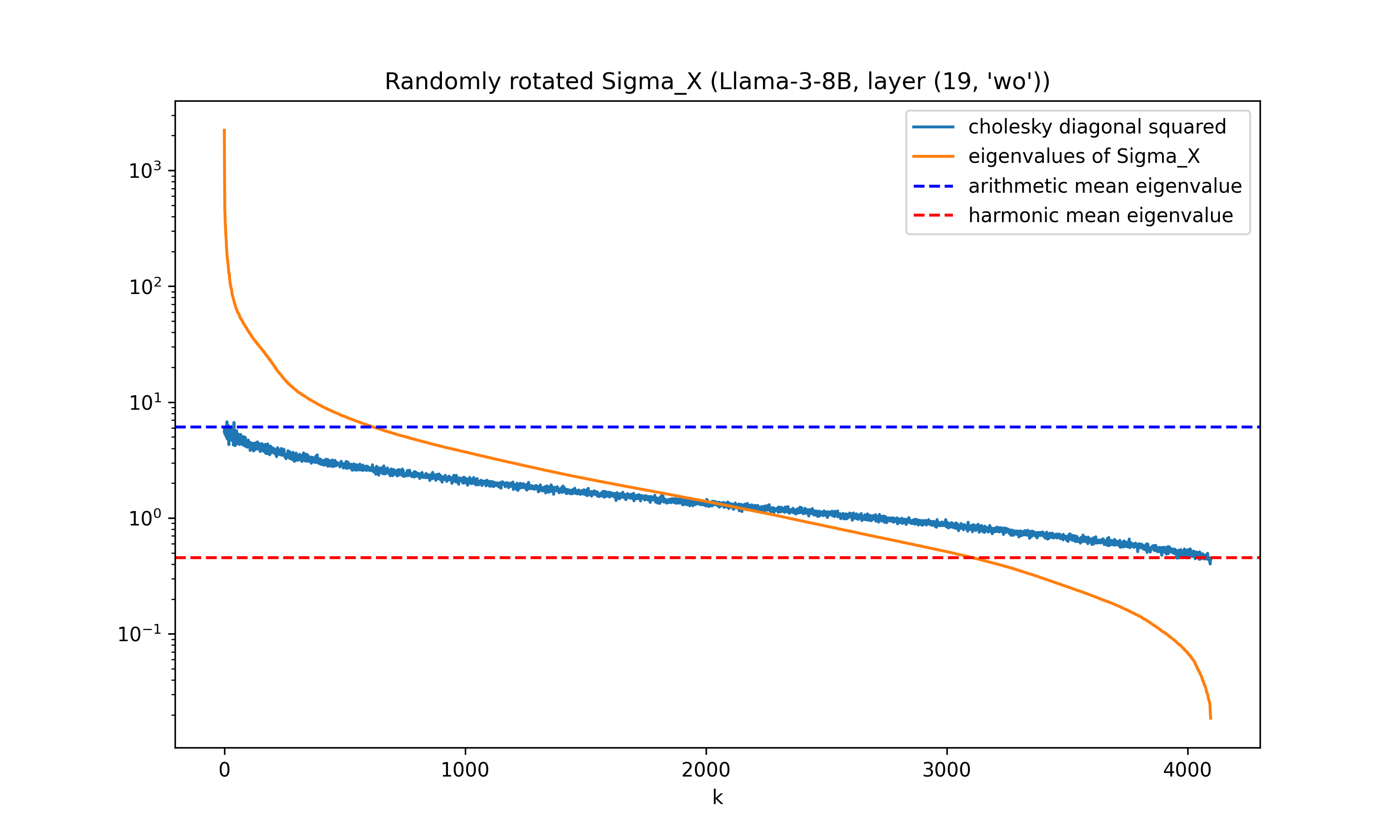}
    \caption{Illustrating Cholesky diagonals $U_{k,k}^2$ for a randomly rotated $V^\top \Sigma_X
    V$ and accuracy of approximation~\eqref{eq:ukk_approx} in terms of spectrum of $\Sigma_X$.}
    \label{fig:random_chol}
\end{figure*}

\textit{WaterSIC vs ``universal codebook''.}
We note that 
theoretical results from~\cite{HOP26}, discussed in Section~\ref{sec:universal_decoder}, demonstrated that
solving~\eqref{eq:WMSEquantproblem} optimally essentially attains full waterfilling solution even
in low rates, and furthermore does so with a fully $\Sigma_X$-oblivious codebook. Note that
WaterSIC chooses per-coordinate scales $\alpha_i \propto U_{i,i}^{-1}$ (Cholesky decomposition of) $\Sigma_X$ and, thus, requires
large $a \gg 1$ to amortize sending of $\alpha_i$'s in high precision (usually BF16). The results in ~\cite{HOP26} on the other hand, are valid for any $a\geq 1$.

\subsection{Future of weight-only quantization}

The high-resolution approximations~\eqref{eq:ShapedLatRD} and~\eqref{eq:ecdqRD} in
Subsection~\ref{subsec:latticequantHR}, combined with the analysis of WaterSIC in
Subsection~\ref{sec:watersic}, show that when WaterSIC followed by EC or  close-to-optimal
spherical shaping is used, we attain the optimal information theoretic distortion
from~\eqref{eq:Dhr} up to a multiplicative gap of $\frac{2\pi e}{12}$. Figure~\ref{fig:sic_rd}
numerically confirms the accuracy of the high-resolution approximations. Furthermore,
Figure~\ref{fig:llama_chol} shows that with random rotation even the vanilla GPTQ algorithm, which
is completely equivalent to the widely used GPTQ/LDLQ algorithms, achieves distortion only
slightly greater than WaterSIC, and is therefore nearly optimal. In light of this, one may wonder
whether the practical schemes are already so good that there is no further room for improvement.
We claim that this is not the case, and list several important directions for improved
quantization schemes:

First, in this survey paper we restricted attention to the high-rate regime. At small
    quantization rates (say $0.5-2$ bits per entry) many other effects will become important:
    necessity of Johnson-Lindenstrauss dimensionality reduction, as shown
    in~\cite{ordentlich2024optimal}, shrinkage and other low-rate effects. Proper shaping and
    exploiting structure of $W$ matrices will be paramount. Indeed, at low
    rates~\cite{lifar2026watersic} shows WaterSIC to have massive advantage over GPTQ, even if the
    latter is Huffman coded.

Second, even our analysis for the high-resolution regime relied on applying EC or near-optimal
    spherical shaping after quantization. While EC over $\ZZ$ can be efficiently implemented, even
    on modern GPUs EC decompression is about an order of magnitude slower than loading the
    uncompressed entries. Thus, the usefulness of EC for running quantized models on a GPU is
    somewhat questionable. On the other hand, simple shaping methods that can be easily made
    efficient, e.g. rectangular shaping, may be highly suboptimal (see Figure~\ref{fig:sic_rd}).
    The problem of shaping schemes that are amenable to fast GPU implementation is at its infancy~\cite{NestQuant}.

Third, while the $\frac{2\pi e}{12}$ high-resolution gap-to-optimality of WaterSIC+EC is relatively small, reducing it further is of interest. This gap stems from the sub-optimality of the integer lattice $\ZZ^n$ as a quantizer. Generally, we can decrease this gap by combining WaterSIC with jointly quantizing $d$ entries from the same row of $W$ to a lattice $L'\subset\RR^d$, followed by EC. If we use the optimal lattice quantizer in $\RR^{d'}$ we can reduce the $\frac{2\pi e}{12}$ gap to optimality to $2\pi e \cdot G_d$ where $G_d$ is the optimal normalized second moment in dimension $d$. See~\cite[Table I]{AgrellAllen22} for a list of the best known NSM for $1\leq d\leq 48$ as of 2023 (some improvements were reported in certain dimensions since then). The downside of replacing $\ZZ$ with a higher-dimensional lattice is that EC of the quantizer's output becomes more challenging as its cardinality is much larger.

Fourth, implementation of WaterSIC relies on the fact that sending per-channel scales $\alpha_i$
can be amortized freely, which is only the case when one has many output neurons in a linear
layer (large $a \gg 1$).

\section*{Acknowledgment}
We thank Alina Harbouzova, Egor Lifar, and Semyon Savkin (MIT EECS) for numerous discussions and help with obtaining calibration matrices for Llama-3-8B.

\bibliographystyle{IEEEtran}
\bibliography{IPRD_Bib}

\end{document}